 \newcommand\Algphase[1]{%
\vspace*{-.7\baselineskip}\Statex\hspace*{\dimexpr-\algorithmicindent-2pt\relax}\rule{\textwidth}{0.4pt}%
\Statex\hspace*{-\algorithmicindent}\textit{#1}%
\vspace*{-.7\baselineskip}\Statex\hspace*{\dimexpr-\algorithmicindent-2pt\relax}\rule{\textwidth}{0.4pt}%
}
\tikzset{
    -Latex,auto,node distance =1 cm and 1 cm,semithick,
    state/.style ={ellipse, draw, minimum width = 0.7 cm},
    point/.style = {circle, draw, inner sep=0.04cm,fill,node contents={}},
    bidirected/.style={Latex-Latex,dashed},
    el/.style = {inner sep=2pt, align=left, sloped}
}
\author{%
  Vahid Balazadeh$^{1}$ \quad Vasilis Syrgkanis$^{2}$\quad Rahul G. Krishnan$^{1}$\\
%   \thanks{Use footnote for providing further information
%     about author (webpage, alternative address)---\emph{not} for acknowledging
%     funding agencies.} 
  $^1$University of Toronto, Vector Institute \quad $^2$Stanford University\\
  \texttt{\{vahid, rahulgk\}@cs.toronto.edu} \\
  % examples of more authors
   \texttt{vsyrgk@stanford.edu} \\
  % \And
  % Coauthor \\
  % Affiliation \\
  % Address \\
  % \texttt{email} \\
  % \And
  % Coauthor \\
  % Affiliation \\
  % Address \\
  % \texttt{email} \\
}
\newcommand{\xhdr}[1]{\noindent{{\bf #1.}}}
\newcommand{\Gcal}{\mathcal{G}}
\newcommand{\Mcal}{\mathcal{M}}
\newcommand{\Fcal}{\mathcal{F}}
\newcommand{\E}{\mathbb{E}}
\newcommand{\R}{\mathbb{R}}
\newcommand{\vect}[1]{\ensuremath{\mathbf{#1}}}
\definecolor{Awesome}{rgb}{1.0, 0.13, 0.32}
\newcommand{\lowerATD}{\underline{\text{ATD}}}
\newcommand{\upperATD}{\overline{\text{ATD}}}
\newcommand{\lowerATE}{\underline{\text{ATE}}}
\newcommand{\upperATE}{\overline{\text{ATE}}}
\newcommand{\scm}{\mathcal{M}}
\newcommand{\graph}{\Gcal}
\newtheorem{corollary}{Corollary}
\newtheorem{lemma}{Lemma}
\newtheorem{assump}{Assumption}
\newtheorem{definition}{Definition}
\newtheorem{theorem}{Theorem}
\title{Partial Identification of Treatment Effects with Implicit Generative Models}
\begin{document}
\maketitle
% \doparttoc % Tell to minitoc to generate a toc for the parts
% \faketableofcontents % Run a fake tableofcontents command for the partocs

% \part{} % Start the document part
% \parttoc % Insert the document TOC
\begin{abstract}
% Problem 
We consider the problem of partial identification, the estimation of bounds on the treatment effects from observational data. Although studied using discrete treatment variables or in specific causal graphs (e.g., instrumental variables), partial identification has been recently explored using tools from deep generative modeling. We propose a new method for partial identification of average treatment effects (ATEs) in general causal graphs using implicit generative models comprising continuous and discrete random variables. Since ATE with continuous treatment is generally non-regular, we leverage the partial derivatives of response functions to define a regular approximation of ATE, a quantity we call \emph{uniform average treatment derivative} (UATD). We prove that our algorithm converges to tight bounds on ATE in linear structural causal models (SCMs). For nonlinear SCMs, we empirically show that using UATD leads to tighter and more stable bounds than methods that directly optimize the ATE.~\footnote{Our code is accessible at \url{https://github.com/rgklab/partial_identification}}
\end{abstract}
\section{Introduction} 

Estimating average treatment effects (ATEs) is a common task that arises in fields involving decision-making, such as healthcare and economics. In the presence of the gold-standard randomized controlled trial (RCT) data, one can compare the outcome variable between treated and control groups to make decisions. But RCTs can be costly to set up and run and are, in many circumstances, infeasible. Consequently, communities are using observational data to assist in decision-making.

Identification of treatment effects from observational data is tied to the structure of the causal graph. For example, the treatment $T$ and outcome $Y$ in Figure~\ref{fig:iv-graph} are confounded by an unobserved random variable, making it impossible to find the causal effect of $T$ on $Y$ only from observational data. 
On the other hand, Figure \ref{fig:backdoor-graph} is identifiable, and one can adjust for confounders using the Back-door formula~\citep{causal-pearl}. 
Even in identifiable settings, non-parametric estimations such as Back-door adjustment formula can point-identify the ATE only with additional assumptions such as positivity, i.e., $P(T=t|X) > 0$ for all values of covariate $X$. Observational data is finite, high-dimensional, and consequently can suffer from severe violations of such assumptions~\citep{d2021overlap}.

\begin{figure}[t]%
\begin{subfigure}[b]{0.25\linewidth}
\centering
    \begin{tikzpicture}
    \node[state] (t) {$T$};
    \node[state] (x) [right =of t, xshift=-0.5cm] {$X$};
    \node[state] (y) [right =of x, xshift=-0.5cm] {$Y$};
    \path (t) edge (x);
    \path (x) edge (y);
    \path[bidirected] (t) edge[bend left=60] (y);
    \path[bidirected] (x) edge[bend left=60] (y);
\end{tikzpicture}
\caption{\small Leaky mediation}
\vspace{1em}
\label{fig:leaky-graph}
\end{subfigure}%
\hspace{1em}
\begin{subfigure}[b]{0.25\linewidth}
\centering
    \begin{tikzpicture}
    \node[state] (z) at (0,0) {$X$};
    \node[state] (t) [right =of z, xshift=-0.5cm] {$T$};
    \node[state] (y) [right =of t, xshift=-0.5cm] {$Y$};
    
    \path (t) edge (y);
    \path (z) edge (t);
    \path[bidirected] (t) edge[bend left=60] (y);
\end{tikzpicture}
\caption{\small IV}
\label{fig:iv-graph}
\vspace{1em}
\end{subfigure}%
\begin{subfigure}[b]{0.25\linewidth}
\centering
\begin{tikzpicture}
    \node[state] (x) at (0,0) {$X$};
    \node[state] (y) [below right =of x, xshift=-0.7cm, yshift=0.5cm] {$Y$};
    \node[state] (t) [below left =of x, xshift=0.7cm, yshift=0.5cm] {$T$};

    \path (x) edge (y);
    \path (x) edge (t);
    \path (t) edge (y);
\end{tikzpicture}%
\caption{\small Back-door}
\label{fig:backdoor-graph}
\vspace{1em}
\end{subfigure}%
\begin{subfigure}[b]{0.25\linewidth}
\centering
\begin{tikzpicture}
    \node[state] (t) at (0,0) {$T$};
    \node[state] (w) [right =of t, xshift=-0.5cm] {$X$};
    \node[state] (y) [right =of w, xshift=-0.5cm] {$Y$};
    
    \path (t) edge (w);
    \path (w) edge (y);
    \path[bidirected] (t) edge[bend left=60] (y);
\end{tikzpicture}
\caption{\small Front-door}
\label{fig:frontdoor-graph}
\vspace{1em}
\end{subfigure}
\begin{subfigure}[b]{\linewidth}
    \centering
    \includegraphics[width=0.85\linewidth]{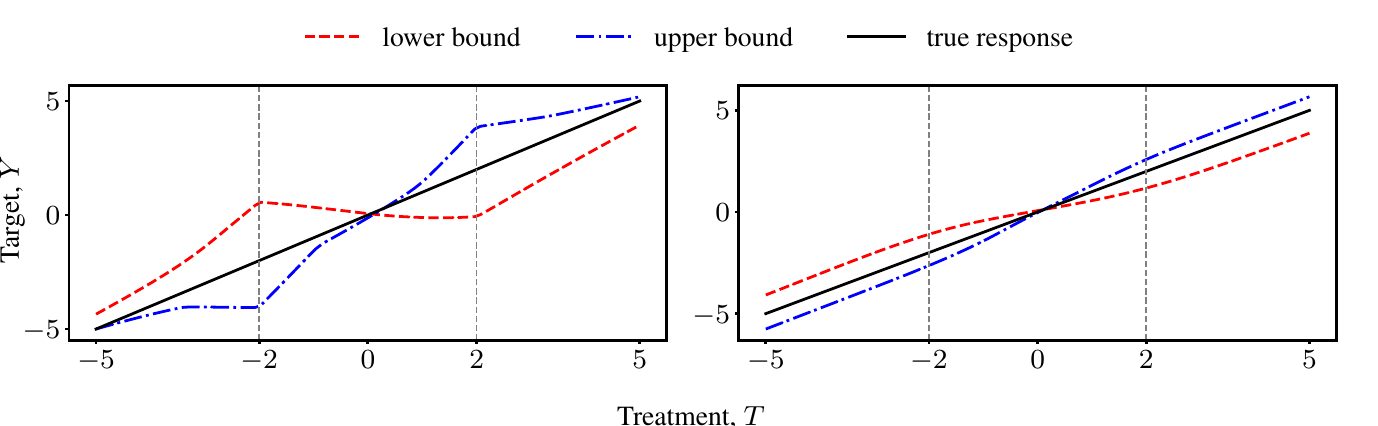}
    \caption{\small Partial identification of ATE in a finite linear Back-door dataset.}
    \label{fig:direct-derivative}
\end{subfigure}
\vspace{0.5em}
\caption{\small The causal graphs for non-identifiable (a) Leaky mediation and (b) Instrumental Variable (IV), and identifiable (c) Back-door and (d) Front-door settings. $T$, $X$, and $Y$ represent treatment, covariates, and target variables. The dashed double-arrows represent latent factors. (e) The response functions corresponding to partial identification of $\E[Y_{T=2}] - \E[Y_{T=-2}]$ in a Back-door linear SCM after training a generative model to match the distribution. (Left) shows the results for directly optimizing the ATE, which leads to a non-informative bound due to the irregularity of ATE with continuous treatments. (Right) is our solution by optimizing the UATD, which results in a tighter bound. Each point $(t, y)$ represents the expected outcome $y$ after intervention $T=t$ in the learned generative model.}
\label{fig:graphs}
\end{figure}

In lieu of the challenges of point-identification, there has been a recognition that decisions can be justified using reliable bounds on the ATE rather than its exact value. For an oncologist treating a cancer patient, knowing that a drug has a significant, positive reduction in the patient's risk of progression may suffice as a rationale to prescribe that drug. This problem is known as partial identification~\citep{manski2003partial}. 
Most existing methods for bounding the ATEs are only applicable in discrete/binary treatment variables~\citep{makermaggie,partial-ident-elias,automated,guo2022partial}. There has been recent interest in continuous treatment settings. However, such methods are applicable for special causal graphs such as the instrumental variables (IV) setting ~\citep{gunsilius-iv, iv-continuous} or make parametric assumptions on the family of treatment-response functions~\citep{padh2022stochastic}. An exception is the work by~\citet{hu2021generative} which provides a non-parametric approach for partial identification using generative adversarial networks (GANs). However, they only provide convergence guarantees for the special case of IV causal graphs.

Using the framework of structural causal models (SCMs) and causal graphs, one can see partial identification as a constrained optimization problem, where the objective, i.e., maximizing/minimizing the ATE, can be written as a post-intervention function of exogenous noise (a.k.a response function) and the constraint is to match the generated samples with the observational distribution. This naturally leads to using generative neural networks such as neural causal models (NCMs)~\citep{causalneural}. 
We find that directly solving the ATE optimization using flexible generative models such as GANs can lead to non-informative and degenerate solutions. The flexibility afforded by generative models such as GANs allows them to deviate significantly from the true response curve in the neighborhood of intervention points to maximize/minimize the ATE while continuing to generate samples akin to the data distribution. Figure \ref{fig:direct-derivative} (left) showcases a typical solution to the ATE optimization.

%Consider data generated from a linear SCM with Back-door causal structure where we seek bounds on ATE: $\E[Y_{(T=2)}] - \E[Y_{(T=-2)}]$, where $Y_{T=t}$ is the causal effect of intervention $T=t$. Figure~\ref{fig:direct-derivative} (left) shows the response curve corresponding to a typical solution of the  

Our insight is that the ATE between any two points can be approximated as an integral over the derivatives of the response function w.r.t. the treatment variable. Rather than directly optimizing the ATE, we optimize the partial derivatives of the response function, a quantity that we refer to as the uniform average treatment derivative (UATD).~\footnote{Average treatment derivative is also known as average partial effect in the literature~\citep{powell1989semiparametric,wooldridge2005unobserved,rothenhausler2019incremental}.} By optimizing the UATD, the model is required to maximize/minimize the partial derivatives for all points within the treatment support, avoiding extreme local solutions as shown in Figure~\ref{fig:direct-derivative} (right). Our contributions are as follows:

\begin{itemize}[leftmargin=*]
    \setlength\itemsep{0pt}
    \item We formally define the partial identification of average treatment effects as a distributionally-constrained optimization problem, where we choose Wasserstein distance as our constraint metric.
    \item For the class of linear SCMs, we prove that the solution to our optimization problem converges to optimal bounds on the true value of ATE in infinite data for general causal graphs.
    \item We use the solution to partial identification of UATDs to find informative bounds on the value of ATE. We introduce a practical algorithm to solve the distributionally-constrained optimization problem using the Lagrange multiplier formulation with alternating optimization. We empirically show that our algorithm results in tighter and more stable bounds than methods that directly optimize the ATE. % ATE optimization approach in~\citet{hu2021generative}.
\end{itemize}

\section{Problem Setup \& Background}
We introduce the definitions and assumptions we will use throughout the paper. Consider the observed data as (possibly continuous) random variables $\vect{V} = \{X_1, \cdots, X_m, T, Y\} \in \R^d$, where $T$, $Y$, and $\{X_1, \cdots, X_m\}$ denote the treatment variable, target variable, and covariates, respectively.

\xhdr{Data generating model}
Our approach will be based on the framework of Structural Causal Models (SCMs). An SCM is a tuple $\scm = (\vect{V}, \vect{U}, \Fcal, P_{\vect{U}})$, where each observed variable $V_i \in \vect{V}$ is a deterministic function of a subset of  variables $\vect{pa}(V_i) \subseteq \vect{V}$ and latent variables $\vect{U}_{V_i} \subseteq \vect{U}$, i.e.,
\begin{align}
    & V_i = f_{V_i}(\vect{pa}(V_i), \vect{U}_{V_i}) \text{ where } f_{V_i} \in \Fcal,\ V_i \not\in \vect{pa}(V_i)
    \label{def:scm}
\end{align}
The only source of randomness are latent variables $\vect{U}$ with probability space $(\Omega, \Sigma, P_{\vect{U}})$. This induces a probability law over the observed variables $P_{\scm}$. We may omit the subscript $\scm$ and denote the observational probability distribution by $P$ throughout the text.
Given $\scm$, one can construct a graph with nodes $\vect{V} \cup \vect{U}$ and directed edges from nodes in $\vect{pa}(V_i) \cup \vect{U}_{V_i}$ to $V_i$. We call this graph the causal graph corresponding to SCM $\scm$ and denote it by $\graph_\scm$ or simply $\graph$. We assume $\graph$ is acyclic and known. Moreover, We will assume each node in the graph is $1$-dimensional.
%\begin{assump}
%Random variables $\vect{V}$ are generated based on an unknown SCM $\scm$. 
%    \label{assump:known-graph}
%\end{assump}
%\xhdr{Causal effect and response function}
For random variable $V$ in the SCM $\scm$, let $V_\scm(\vect{u})$ be its deterministic value after fixing a realization $\vect{u}$ of latent variables $\vect{U}$. The causal effect of treatment $T$ on target $Y$ is:
\begin{definition}[Causal Effect]
Let $Y_{\scm(T=t)}(\vect{u})$ be the value of $Y$ by fixing $\vect{U}=\vect{u}$ and changing the function $f_T$ to a constant function $f_T = t$ in $\scm$. Then, we call the random variable $Y_{\scm(T=t)}$ the causal effect of treatment $T=t$ on target $Y$. We may simplify the notation and write it as $Y_{t}$ if the SCM $\scm$ and treatment variable $T$ are known from context. Note that $Y_{T(\vect{u})}(\vect{u}) = Y(\vect{u})$.
\end{definition}
\vspace{-0.5em}
When $T$ is continuous, then we can view $\{Y_t: t \in supp(T)\}$ as a stochastic function defined on $(\Omega, \Sigma, P_{\vect{U}})$. This is referred to as the response function, partial dependence plot, and dose-response curve in the literature~\citep{zhao2021causal,ritz2015dose,chernozhukov2018automatic}.

\xhdr{Average treatment effect, average treatment derivative, and partial identification}
Our goal is to estimate bounds on the effectiveness of a treatment regime on a population from the observational
distribution $P$ and the causal graph $\graph$.
In the continuous treatment case, where there is no "on"/"off" notion of treatment, we can compare the average causal effect of an arbitrary treatment (dosage) to the average causal effect at a fixed point $T=t_0$. For example, to indicate the effect relative to not prescribing any treatment, we can choose $t_0 = 0$. This quantity is known as the average treatment effect, average level effect, or average dose effect in the literature on continuous treatment setting~\citep{hirano2004propensity,kennedy2017non,callaway2021difference}.
\begin{definition}[Average Treatment Effect]
    For SCM $\scm$, the average treatment effect (ATE) at $T=d$ w.r.t. a fixed point $T=t_0$ is defined as
    \begin{align}
        \text{ATE}_{\scm}(d) := \E_{\vect{u}\sim P_{\vect{U}}}[Y_{\scm(T=d)}(\vect{u}) - Y_{\scm(T=t_0)}(\vect{u})]
        \label{eq:ate}
    \end{align}
\end{definition}
Note that estimating the ATE and finding bounds on it only depends on the value of the average response function in $T=d$ and $T=t_0$. As pointed in~\citet{gunsilius-iv}, this quantity can take arbitrary values if we do not make any assumptions on the set of response functions. Here, we assume the partial derivative of the response function w.r.t. the treatment, i.e., ${\partial Y_t}/{\partial t}$ exists and is a bounded continuous function. We then define the average treatment derivative as the following:
\begin{definition}[Average Treatment Derivative]
For the treatment regime $f_T$ in SCM $\scm$, we define the average treatment derivative (ATD) as 
\begin{align}
    \text{ATD}_{\scm} = \E_{\vect{u} \sim P_{\vect{U}}}\left[\frac{\partial Y_{\scm(T=t)}(\vect{u})}{\partial t}\Big|_{t=T(u)}\right],
    \label{eq:atd}
\end{align}
\end{definition}
Estimating the ATD can be seen as a proxy for the effectiveness of the prescribed treatment, where we consider the population-level average effect of an infinitesimal increase in the treatment/dosage~\citep{rothenhausler2019incremental}. In this work, however, we leverage the regularity of this quantity to achieve smoother solutions to the ATE estimation. We will expand on this in~\autoref{sec:method}.

Note that we cannot readily use eq.~\ref{eq:ate} (or eq.~\ref{eq:atd}) to estimate the ATE (or ATD), as we only have access to the observational
distribution $P$ and the causal graph $\graph$ and not the latent distribution $P_\vect{U}$.
In fact, ATEs are generally non-identifiable, i.e., there exist multiple SCMs with the same causal graph $\graph$ and
generated distribution $P$ that result in different values of ATE.
For some graphs, however, one can use non-parametric identification algorithms like $do$-calculus to identify the causal effect from the
observational distribution~\citep{causal-pearl}. In practice, even for identifiable causal graphs, we cannot pinpoint the true ATE due to the uncertainty caused by sampling variation and finite sample errors. Instead, we are interested in finding a tight set of possible solutions that will contain the true value of ATE (or ATD) with high probability. 
This is known as the partial identification problem in the literature~\citep{manski2003partial}.\footnote{\small In the literature, partial identification is not concerned with sampling uncertainty and is defined population-wise for non-identifiable causal effects. However, in this paper, we abuse the terminology and use partial identification for both non-identifiable quantities and identifiable effects with finite samples.} More formally, the partial identification of ATDs/ATEs is defined as:
\begin{definition}[Partial Identification of ATD/ATE]
    \label{def:partial-id}
    Partial identification of ATD is the solution to the following optimization problem:
    %\begin{align}
    %    & (\min_{{\scm'} \in \mathfrak{M}} \text{ATD}_{\scm'},\;
    %    \max_{{\scm'} \in \mathfrak{M}} \text{ATD}_{\scm'}) \nonumber \\[3pt]
    %    &\text{ s.t.}\ P_{\scm'} = P\nonumber \\
    %    &\text{\space~\space~\space~\space}\ \graph_{\scm'} = \graph
    %    \label{opt:PI}
    %\end{align}
    \begin{equation}
    (\min_{{\scm'} \in \mathfrak{M}} \text{ATD}_{\scm'},\max_{{\scm'} \in \mathfrak{M}} \text{ATD}_{\scm'})\text{ s.t. }P_{\scm'} = P\text{ \& }\graph_{\scm'} = \graph
        \label{opt:PI}
    \end{equation}
    where $\mathfrak{M}$ is the set of all SCMs on random variables $\vect{V}$.
    We denote the solution to the above problem as $(\lowerATD, \upperATD)$.
    Similarly, we can define the partial identification of ATEs by replacing $\text{ATD}_{\scm'}$ with $\text{ATE}_{\scm'}(d)$
    in eq.~\ref{opt:PI}.
    We refer to the solution to the latter problem as $(\lowerATE(d), \upperATE(d))$.
\end{definition}

\xhdr{Implicit generative models}
To solve the partial identification problem, we use the expressive power of generative models to satisfy the distribution constraint in eq.~\ref{opt:PI}. Choosing distance measures such as Jensen-Shannon divergence or Wasserstein metric results in models such as GANs or Wasserstein GANs (WGANs)~\citep{goodfellow2014generative, wgan}.
The typical way to implement these models is to solve a minimax game between the generator and a discriminator.
However, adding the ATD minimization/maximization term to the minimax loss function will result in unstable training.
Instead, in our practical algorithm, we will use Sinkhorn Generative Networks (SGNs) that use Sinkhorn divergence $S_{\epsilon}$, a differentiable $\epsilon$-approximation of
Wasserstein metric, as the distance measure between generated and true samples~\citep{cuturi2013sinkhorn,sinkhorn2,sinkhorn}.
%
% \begin{definition}[Sinkhorn Divergence]
% \label{def:sinkhorn}
%     Let $P$, $Q$ be two probability measures on $\R^d$.
%     %
%     Define the $\delta$-approximate optimal transport between $P$ and $Q$ as
%     \begin{align}
%         OT_\delta(P, Q) := \inf_{\pi;\text{ s.t. } \pi_1 = P, \pi_2 = Q} \int_{\R^d \times \R^d} C d\pi +
%         \delta \int_{\R^d \times \R^d} \log \left(\frac{d\pi}{dP dQ}\right) d\pi
%     \end{align}
%     where $C$ is a cost function, e.g., $C(x, y) = ||x-y||$, and $\pi_1$, $\pi_2$ are marginal distributions of coupling measure
%     $\pi$.
%     %
%     Sinkhorn divergence is defined as the corrected version of $OT_\delta$:
%     \begin{align}
%         S_\delta = OT_\delta(P, Q) - \frac{1}{2}\left[OT_\delta(P, P) + OT_\delta(Q, Q)\right]
%     \end{align}
%     For $C(x, y) = ||x-y||$, Sinkhorn divergence will approximate the 1-Wasserstein distance.
% \end{definition}
Due to the differentiability of Sinkhorn divergence, we will only need to train a generator network enabling us to sidestep much of the unstable minimax training in (W)GANs.

\section{Related Work}
This work builds upon partial identification and generative causal models.

\xhdr{Partial identification} Finding informative bounds on treatment effects has been well-studied 
in the existing literature (\citep{robin-bound,manski-bound,robin-graphical-ineq,ramsahai-bounds,richardson2014nonparametric,Miles2015OnPI,NoamFinkelstein,zhang2021bounding,zhang2021non}).
\citet{balke-bound} find the tightest possible bound for the discrete instrumental variable setting by converting it to
a linear programming problem.
For the backdoor setting and binary treatments, \citet{makermaggie} provide probabilistic upper/lower bounds on potential outcomes in the finite sample regime.
%
% \citet{NoamFinkelstein} introduce modeling assumptions that can be expressed as linear constraints on the model's parameters
% and result in partial identifiability.
% %
% Their findings, however, rely on strong assumptions, including discrete data and instrumental variable
% graphical constraints.
%
Recently, \citet{partial-ident-elias} and \citet{automated} independently describe a polynomial programming approach
to solve the partial identification for general causal graphs.
They both use the notion of canonical SCMs to map the latent variables to the space of functions from treatment $T$ to outcome $Y$.
Though they show their polynomial programming formulation finds the optimal bound, their approach is only applicable to
discrete random variables with small support.
In fact, the time complexity of their algorithm grows exponentially with the size of the support set of variables,
making their algorithm intractable for continuous settings.

%There is a recent line of research on estimating bounds over the ATE in continuous treatment setting.
\citet{gunsilius2019path} extends the commonly-used linear programming approach to partial identification of IV graphs with continuous treatments. 
They use a stochastic process representation of the variables and solve the linear programming via sampling.
However, their method suffers from stability issues, as discussed in \citet{iv-continuous} and is only applicable for the IV setting.
\citet{iv-continuous,padh2022stochastic} parameterize the space of response functions by assuming them as linear combinations of a set of fixed basis functions.
Then, they match the first two moments of observed distribution while minimizing/maximizing the ATE.
However, they do not provide any theoretical guarantees on the tightness of their derived bounds.

Most similar to our work is \citet{hu2021generative} who use generative adversarial networks (GANs) to match the observed distribution and search for response functions with maximum/minimum ATEs. They provide convergence guarantees for the instrumental variable causal graph with linear models. Their approach is also based on the minimax game between generator and discriminator, which can result in unstable training.
Our work differs in a few important ways. We focus on partial identification of average derivatives and use that to find bounds over the ATE. Using this approach, we show that our derived bounds converge to the optimal bounds for linear SCM with general causal graphs, including both identifiable and non-identifiable settings.
We use Sinkhorn divergence, a differentiable approximation of Wasserstein distance, to train our implicit generative models.
Empirically, we find that this avoids the unstable training of GANs.
\citet{guo2022partial} studied the partial identification of ATE with noisy covariates. 
Their work is similar to our approach in that we both use a similar robust optimization formulation. However, they focus on identifiable causal graphs, where one can use adjustment formulas such as the Back-door formula and make parametric assumptions on the joint distribution of observed variables. %Finally, we provide theoretical guarantees on tightness of our derived bounds for general causal graphs.

\xhdr{Generative causal models} 
\citep{goudet2017causal,yoon2018ganite,causalGAN,cntr-generative-networks} use generative models to capture a causal perspective on evaluating the effect of interventions on high-dimensional data such as images. They do not consider the problem of bounding treatment effects. 
%\xhdr{Causal estimation models}
\citet{causalneural} introduced Neural Causal Models (NCMs) that leverages the universal
approximability of neural networks to learn the SCM. Although it is not generally possible to learn the 
true SCM by training on the observational data, they prove that NCMs can be used to test the identifiability of causal effects and propose an algorithm to estimate identifiable causal effects.
Their work's theory and empirical instantiation are in the context of discrete random variable datasets.
Our work builds upon NCMs for partial identification with both continuous and discrete random variables.

% \rahul{consider moving to intro} Our aim is to derive bounds over the ATD, while their goal is to estimate the ATE in identifiable settings.
% \vahid{add citations on the continuous treatment case \citep{callaway2021difference, kennedy2017non, hirano2004propensity,
%     galagate2015estimating}}
% \vahid{Add citation to the incremental causal inference paper}

\section{Partial Identification using Implicit Generative Models}
\label{sec:method}
We explain our method to solve the partial identification problem in Def.~\ref{def:partial-id} using implicit generative models.
In~\autoref{subsec:graph-generative}, we describe partial identification of ATDs as a constrained optimization problem using $\graph$-constraint generative models~\citep{causalneural}.
Then, in~\autoref{subsec:theory}, we show that the solution to this constrained optimization problem converges to the optimal bounds on the ATD in infinite data samples.
We prove our results for linear SCMs with general causal graphs, i.e., both identifiable and non-identifiable settings.
Next, we propose our approach to extend the partial identification of ATDs to ATEs.
Finally, we describe a practical algorithm to solve our distributionally-constrained optimization problem in~\autoref{subsec:algorithm}.
\subsection{$\graph$-constraint generative models}
\label{subsec:graph-generative}
To solve the partial identification problem, we need to search over the set of all possible SCMs $\mathfrak{M}$.
This is generally not feasible, as there is no constraint on the distribution of the latent variables $P_{\vect{U}}$,
as well as the function family $\mathcal{F}$.
Instead, we parameterize the space of all SCMs that are consistent with causal graph $\graph$ using neural networks.
More specifically, we use $\graph$-constraint generative models:
\begin{definition}[$\graph$-constraint Generative Models (Def.~7 in~\citet{causalneural})]
    For a given causal graph $\graph$, a $\graph$-constraint generative model is a tuple
    $\Mcal_\graph^\theta = (\vect{V}, \hat{\vect{U}}, \Fcal^\theta, \hat{P}_{\hat{\vect{U}}})$, where each $V_i \in
    \vect{V}$ is generated from
    \begin{align}
        V_i = f^\theta_{V_i}(\vect{pa}(V_i), \hat{\vect{U}}_{\vect{C}}) \text{ for }  f^\theta_{V_i} \in \Fcal^\theta,
    \end{align}
    where $\vect{pa}(V_i)$ is the observed parents of node $V_i$ in $\graph$ and $\hat{\vect{U}}_{\vect{C}}\in \vect{U}$ is the latent noise corresponding to maximal $C^2$-Component $\vect{C} \subseteq \vect{V}$ containing node $V_i$, i.e., each pair of variables in $\vect{C}$ have common latent parent nodes.
    In addition, $\hat{P}_{\hat{U}} \sim \texttt{Unif}(0, 1)$ for each $\hat{U} \in \hat{\vect{U}}$.
    \label{def:graph-generative}
\end{definition}
$\graph$-constraint generative models make the search over the set $\mathfrak{M}$ feasible by limiting it to generative
models with \textit{uniformly} distributed latent variables that are consistent with causal graph $\graph$.
In fact, in their Theorem~3, \citet{causalneural} show that for \textit{any} discrete SCM $\scm^*$ with causal
graph $\graph$, there exists a $\graph$-constrained generative model $\Mcal_\graph^\theta$ that generates the same causal
effect, i.e., $Y_{\scm^*(T=t)} = Y_{\Mcal_\graph^\theta(T=t)}$ a.s. Their proof technique, however, only works for SCMs with discrete variables.
Here, we do not prove the expressiveness of $\graph$-constrained generative models for continuous SCMs. Instead, for
simplicity and completeness of our theoretical results in \autoref{subsec:theory}, we assume that the true SCM is
a $\graph$-constrained generative model itself.
In our experiments, we empirically show that our results hold even for SCMs with different latent distributions, such as Gaussian noise.
\begin{assump}
    The true SCM $\scm$ is a $\graph$-constrained generative model.
    In other words, there exist $\theta$ such that $\scm = \Mcal_\graph^\theta$.
    \label{assump:true-scm}
\end{assump}

Under Assumption~\ref{assump:true-scm}, we reformulate the problem in eq.~\ref{opt:PI}
using generative models, i.e.,
%\begin{align}
%    & (\min_{\theta} \text{ATD}_{\scm^\theta_\graph},\;
%%        \max_{\theta} \text{ATD}_{\scm^\theta_\graph}) \nonumber \\[3pt]
%    &\text{ s.t.}\ P_{\scm_{\graph}^{\theta}} = P
%    \label{opt:new-PI}
%\end{align}
\begin{equation}
    (\min_{\theta} \text{ATD}_{\scm^\theta_\graph},\;
        \max_{\theta} \text{ATD}_{\scm^\theta_\graph})\text{ s.t. }\ P_{\scm_{\graph}^{\theta}} = P
    \label{opt:new-PI}
\end{equation}

In practice, we never have access to true distribution $P$ as we only observe a finite number of samples corresponding to the empirical distribution
$P^n = \frac{1}{n} \sum_{i=1}^n \delta_{\vect{v}^{(i)}}$ for a given dataset
$\{\vect{v}^{(1)}, \cdots, \vect{v}^{(n)}\}$.
Also, the observed variables may be biased due to noisy measurements.
Therefore, we reformulate the problem in eq.~\ref{opt:new-PI} as a constrained optimization problem.
We choose the 1-Wasserstein metric as our distance measure, which naturally results in generative models such as WGANs.
We will state our theory in~\autoref{subsec:theory} based on this metric.
However, in~\autoref{subsec:algorithm}, we will propose a practical algorithm that uses Sinkhorn divergence, a differentiable approximation of 1-Wasserstein distance, for more stable results.
Our constrained optimization problem is as follows:
% \begin{align}
%    & \left(\min_{\theta} \text{ATD}_{\scm^\theta_\graph},\;
%    \max_{\theta} \text{ATD}_{\scm^\theta_\graph}\right) \nonumber \\[3pt]
%    &\text{ s.t.}\ W_1\left(P_{\scm_{\graph}^{\theta}}, P^n\right) \leq \alpha_n
%    \label{opt:robust-PI}
%\end{align}
\begin{equation}
    \left(\min_{\theta} \text{ATD}_{\scm^\theta_\graph},\max_{\theta} \text{ATD}_{\scm^\theta_\graph}\right) \text{ s.t. }\ W_1\left(P_{\scm_{\graph}^{\theta}}, P^n\right) \leq \alpha_n
    \label{opt:robust-PI}
\end{equation}
where $\alpha_n$ is a hyper-parameter that specifies the level of tightness of the bounds.
We denote the solution to eq.~\ref{opt:robust-PI} as $(\hat{\lowerATD}, \hat{\upperATD})$.
In the case of noisy measurements, we need domain knowledge of how noisy the data is to determine the value of $\alpha_n$.
Otherwise, we can use the finite-sample convergence rate of empirical Wasserstein distance to choose an appropriate value of $\alpha_n$~\citep{weed2019sharp}.
As our theoretical results are concerned with the infinite-sample case, we will assume that there exist values of $\alpha_n$ such that the true distribution lies within the Wasserstein ball.

\begin{assump}
    For each $n \in \mathbb{N}$, there exist $\alpha_n > 0$ such that $W_1(P, P^n) \leq \alpha_n$.
    \label{assump:wass-ball}
\end{assump}

\subsection{Theoretical guarantees and extension to ATEs}
\label{subsec:theory}
Assumptions~\ref{assump:true-scm} and~\ref{assump:wass-ball} ensure that the bound derived by eq.~\ref{opt:robust-PI}
contains the true value of ATD.
However, we do not know how informative/tight the derived bounds are.
In fact, one can always return $(-\infty, +\infty)$ as one solution to partial identification.
This part gives theoretical guarantees that our algorithm can result in tight bounds over ATD.
In particular, we focus on linear SCMs and show that, under the infinite number of samples, our algorithm
converges to the optimal bound $(\lowerATD, \upperATD)$ for both identifiable and non-identifiable causal graphs. See~\autoref{appx:proof-tight-bound} for the proof.
%
%\begin{definition}[Backdoor Setting]
%    We call the data generated from the Backdoor setting if the data-generating SCM $\scm$ is as follows:
%    \begin{align}
%        \vect{X} &= f_{\vect{X}}(\vect{U}_{\vect{X}}) \nonumber \\
%        T &= f_{T}(\vect{X}, \vect{U}_{T}) \nonumber \\
%        Y &= f_{Y}(T, \vect{X}, \vect{U}_{Y})
%    \end{align}
%    where $\vect{X} = (X_1, \cdots, X_m)$ and $\vect{U}_{\vect{X}}$, $\vect{U}_{T}$, and $\vect{U}_{Y}$ are disjoint latent
%    variables generated from some distribution $P^\theta_{\vect{U}}$.
%\end{definition}
\begin{definition}[Linear SCMs]
    SCM $\scm = (\vect{V}, \vect{U}, \mathcal{F}, P_{\vect{U}})$ is linear, if
    \begin{align}
        V_i = \vect{a}^\top_{V_i} \vect{pa}(V_i) + \vect{b}^\top_{V_i} \vect{U}_{V_i} \;\text{ for vectors }\vect{a}_{V_i}, \vect{b}_{V_i} \in \mathcal{F}
    \end{align}
\end{definition}

\begin{theorem}[Tight Bounds]
    Assume the dataset $\{\vect{v}^{(1)}, \cdots, \vect{v}^{(n)}\}$ is generated from a linear SCM.
    Then, under assumptions~\ref{assump:true-scm}, and~\ref{assump:wass-ball}, the solution
    to the constrained optimization problem in eq.~\ref{opt:robust-PI} converges to the optimal bound over the ATD in infinite samples,
    i.e., $\hat{\lowerATD} \to \lowerATD$ and $ \hat{\upperATD} \to \upperATD$.
    \label{theorem:tight-bound}
    \vspace{-1em}
\end{theorem}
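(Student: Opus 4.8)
The plan is to exploit the special structure of linear SCMs, under which the ATD collapses to a single constant — the total causal effect of $T$ on $Y$ — and then to argue that the finite-sample feasible set (a Wasserstein ball) sandwiches and converges to the population feasible set (exact distribution matching). First I would show that for a linear SCM, intervening $T=t$ makes every downstream variable an affine function of $t$, so $\partial Y_{\scm(T=t)}(\vect{u})/\partial t$ is a constant $\beta$ independent of both $\vect{u}$ and $t$; this constant equals the sum over directed paths from $T$ to $Y$ of the products of edge coefficients, a polynomial in the structural parameters $\theta$. Consequently $\text{ATD}_{\scm^\theta_\graph}=\beta(\theta)$ is a continuous (indeed polynomial) function of $\theta$, and the population bounds $(\lowerATD,\upperATD)$ are exactly the min/max of $\beta(\theta)$ over the feasible set $\{\theta : P_{\scm^\theta_\graph}=P\}$.

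Next comes the easy inclusion. Since the true SCM is itself $\graph$-constrained (Assumption~\ref{assump:true-scm}) and generates $P$ exactly, Assumption~\ref{assump:wass-ball} gives $W_1(P,P^n)\le\alpha_n$, so every $\theta$ with $P_{\scm^\theta_\graph}=P$ is feasible for eq.~\ref{opt:robust-PI}. Hence the population feasible set is contained in the finite-sample feasible set, which immediately yields $\hat{\lowerATD}\le\lowerATD$ and $\upperATD\le\hat{\upperATD}$: the estimated bounds can only be wider. It remains to control the overshoot in the other direction.

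For the reverse inequality I would take $\alpha_n\to 0$ (legitimate because $W_1(P,P^n)\to0$ almost surely for distributions with finite first moment, so one may set $\alpha_n$ to the empirical Wasserstein rate) and argue by compactness. Any finite-sample-feasible $\theta$ satisfies, by the triangle inequality, $W_1(P_{\scm^\theta_\graph},P)\le W_1(P_{\scm^\theta_\graph},P^n)+W_1(P^n,P)\le 2\alpha_n\to0$. I would then show that feasibility forces the parameters into a bounded set: were some coefficients to diverge, the second moments of the generated variables would blow up, contradicting $W_1$-closeness to the fixed distribution $P$, which has finite moments. Restricting to this compact parameter region and taking a sequence of near-minimizers $\theta_n$ realizing $\hat{\lowerATD}$, I would extract a convergent subsequence $\theta_n\to\theta^\star$. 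Continuity of the map $\theta\mapsto P_{\scm^\theta_\graph}$ in $W_1$ (the generated samples are continuous functions of $\theta$ applied to fixed uniform noise) gives $P_{\scm^{\theta^\star}_\graph}=P$, so $\theta^\star$ is population-feasible, and continuity of $\beta$ gives $\lim_n \text{ATD}_{\scm^{\theta_n}_\graph}=\beta(\theta^\star)\ge\lowerATD$. Thus $\liminf_n\hat{\lowerATD}\ge\lowerATD$, and combined with the easy inclusion, $\hat{\lowerATD}\to\lowerATD$; the argument for $\hat{\upperATD}\to\upperATD$ is symmetric.

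The main obstacle is this last compactness-and-closedness step. Two points need care: (i) establishing the coercivity estimate that confines feasible parameters to a compact set, i.e.\ linking the magnitude of the structural coefficients to a quantitative lower bound on $W_1(P_{\scm^\theta_\graph},P)$, which relies on the linearity of the model and the fixed uniform noise distribution; and (ii) the $W_1$-continuity of $\theta\mapsto P_{\scm^\theta_\graph}$, together with the fact that along non-identifiable directions (for instance the latent $T$–$Y$ confounding in the IV or leaky-mediation graphs) the distribution is invariant while $\beta$ varies, so the feasible set may be a nontrivial manifold; one must verify that the min/max of the continuous $\beta$ are still attained on its compact intersection with the bounded region. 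Once these are in place, the epi-convergence of the constrained programs delivers convergence of the optimal values.
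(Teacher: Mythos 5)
Your skeleton is essentially the paper's own proof: reduce the ATD of a linear $\graph$-constrained SCM to a continuous function $\beta(\theta)$ of the parameters (the paper's $g(\theta)$); note the population-feasible set is contained in every finite-sample feasible set, giving the easy inequality; send $\alpha_n \to 0$ and use the triangle inequality so that finite-sample-feasible parameters satisfy $W_1(P_{\scm_\graph^\theta},P)\to 0$ (the paper's Lemma~\ref{lemma:convergence}); extract a convergent subsequence of minimizers; and use continuity of $\theta \mapsto P_{\scm_\graph^\theta}$ (the paper's Lemma~\ref{lemma:continuous-func} plus the continuous mapping theorem) to conclude the limit point is population-feasible. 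Your direct liminf formulation versus the paper's argument by contradiction is immaterial.

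The genuine gap is your coercivity step (i), and it is not a fixable technicality: the claim that diverging structural coefficients force the second moments of the generated observables to blow up is false in exactly the non-identifiable settings the theorem covers. Take the IV graph with a null instrument: $T = b U + e\,\epsilon_T$, $Y = c\,T + d\,U + f\,\epsilon_Y$ with fixed-variance latent noise. The observational constraints only pin down combinations such as $\text{Cov}(T,Y) = c\,\text{Var}(T) + bd\,\text{Var}(U)$ and a quadratic relation for $\text{Var}(Y)$; one can let $c \to +\infty$, take $d \approx -c\sqrt{\text{Var}(T)/\text{Var}(U)}$ of opposite sign, and rescale $b,e,f$ so that all first and second moments of the observables are unchanged (with Gaussian noise the entire distribution is unchanged) — the diverging causal path is cancelled by the diverging confounding path. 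So feasibility, even under exact distribution matching (let alone a $W_1$ ball, which in any case controls only first-order moments), does not confine $\theta$ to a compact set, and indeed $\upperATD$ itself can be $+\infty$ in such graphs. This is precisely why the paper does not attempt your step (i): it \emph{assumes} the parameter space $\Theta$ is a bounded closed subset of $\R^{|E|}$ (a technical assumption in \autoref{appx:proof-tight-bound}, justified by regularization in practice), defines both the estimated and the optimal bounds relative to that $\Theta$, and remarks that the proof holds exactly as long as $\Theta^n$ and $\Theta^\infty$ are bounded. Without that assumption, or a genuinely new argument replacing it, your sequence of near-minimizers need not have a convergent subsequence, and the claimed limit can be vacuous ($\pm\infty$ on both sides). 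Your concern (ii), by contrast, poses no real difficulty and is resolved exactly as you suggest: the paper writes $\vect{V}_\theta = \vect{A}(\theta)\hat{\vect{U}}$ with $\vect{A}$ continuous in $\theta$, and attainment of the min/max of the continuous $\beta$ follows from compactness of $\Theta$ — but again, that compactness is assumed, not derived.
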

Up until now, we have only focused on partial identification of ATDs.
Here, we discuss how to extend our results to find bounds on ATEs.
A naive solution is to replace ATD with ATE in eq.~\ref{opt:robust-PI} and directly optimize it. However, as demonstrated in the experiments, this approach can result in non-informative bounds. In fact, ATE with continuous treatments is a non-regular quantity~\citep{kennedy2017non}.

Instead, we claim that we can use the same generative model trained for partial identification of ATD to bound the value
of ATE.
In particular, we define a new objective function by uniform intervention on the treatment, which we call
uniform average treatment derivative (UATD), and show that the solution to partial identification of UATD matches the solution to partial
identification of ATEs.
\begin{definition}[UATD] \label{def:uatd}
    For an SCM $\scm$, we define the uniform average treatment derivative (UATD) at interval $[t_0, d]$ as
    \begin{align}
        \text{UATD}_{\scm}[t_0, d] := \E_{\vect{u}\sim P_{\vect{U}}}
        \left[\E_{t \sim \texttt{Unif}[t_0, d]}\left[\frac{\partial Y_t(\vect{u})}{\partial t}\right]\right]
    \end{align}
    % where $\texttt{Unif}[t_0, d]$ is a uniform distribution over interval $[t_0, d]$.
\end{definition}
Now, we state our result on using average derivatives to solve partial identification of ATEs:
\begin{corollary} \label{theorem:ade-theorem}
    Let $\theta^*$ be the solution to the partial identification of UATD at interval $[t_0, d]$.
    Then, $\theta^*$ is also a solution to the partial identification of $\text{ATE}(d)$. If the true SCM $\scm$ is linear, then the bound is tight, i.e., $\hat{\lowerATE}(d) \to \lowerATE(d)$ and $ \hat{\upperATE}(d) \to  \upperATE(d)$ as $n \to \infty$.
\end{corollary}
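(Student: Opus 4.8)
The plan is to prove the statement in two stages: first an exact algebraic identity linking UATD to ATE that holds for \emph{every} SCM, and then a reduction of the tightness claim to the guarantee already established in Theorem~\ref{theorem:tight-bound}. Assume without loss of generality that $d > t_0$ (consistent with the interval notation $[t_0,d]$). For any SCM $\scm'$ with graph $\graph$, I would first rewrite the inner expectation in $\text{UATD}_{\scm'}[t_0,d]$ as a normalized integral and apply the fundamental theorem of calculus to the response function,
\[
\E_{t\sim\texttt{Unif}[t_0,d]}\left[\frac{\partial Y_t(\vect{u})}{\partial t}\right]
=\frac{1}{d-t_0}\int_{t_0}^{d}\frac{\partial Y_t(\vect{u})}{\partial t}\,dt
=\frac{Y_d(\vect{u})-Y_{t_0}(\vect{u})}{d-t_0}.
\]
Taking $\E_{\vect{u}\sim P_{\vect{U}}}$ on both sides and exchanging the order of integration — justified by Fubini, since $\partial Y_t/\partial t$ is assumed bounded and continuous over a bounded interval — gives the key identity $\text{UATD}_{\scm'}[t_0,d]=\tfrac{1}{d-t_0}\,\text{ATE}_{\scm'}(d)$, valid for \emph{all} $\scm'$.

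For the first (non-asymptotic, non-linear) claim, note that the partial identification of $\text{UATD}[t_0,d]$ and of $\text{ATE}(d)$ are optimized over the \emph{same} feasible set $\{\theta : W_1(P_{\scm_\graph^\theta}, P^n)\le \alpha_n\}$, and by the identity their objectives differ only by the positive constant $1/(d-t_0)$. Multiplying an objective by a positive constant leaves its minimizers and maximizers unchanged, so any $\theta^*$ solving the partial identification of $\text{UATD}[t_0,d]$ also solves the partial identification of $\text{ATE}(d)$, with $\hat{\lowerATE}(d)=(d-t_0)\,\hat{\underline{\text{UATD}}}[t_0,d]$ and $\hat{\upperATE}(d)=(d-t_0)\,\hat{\overline{\text{UATD}}}[t_0,d]$. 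This establishes the first sentence of the corollary with no linearity assumption.

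It remains to prove tightness when the true SCM is linear. Because the identity also holds population-wise, the optimal bounds satisfy $\lowerATE(d)=(d-t_0)\,\underline{\text{UATD}}[t_0,d]$ and $\upperATE(d)=(d-t_0)\,\overline{\text{UATD}}[t_0,d]$, so it suffices to show $\hat{\underline{\text{UATD}}}[t_0,d]\to\underline{\text{UATD}}[t_0,d]$ and $\hat{\overline{\text{UATD}}}[t_0,d]\to\overline{\text{UATD}}[t_0,d]$ as $n\to\infty$; the scaling then transfers the convergence to the ATE bounds. For the true linear model the response function $Y_t(\vect{u})$ is affine in $t$, so $\partial Y_t(\vect{u})/\partial t$ equals the constant total-effect coefficient, the $\texttt{Unif}$-averaging is vacuous, and $\text{UATD}=\text{ATD}$ for the true SCM. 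The convergence argument behind Theorem~\ref{theorem:tight-bound} does not exploit the precise form of ATD beyond its being a bounded, regular expectation of response-derivatives matched against a shrinking Wasserstein ball; I would therefore replay that argument with $\text{UATD}[t_0,d]$ as the objective to obtain the required convergence. The main obstacle lies exactly here: even when the true SCM is linear, the candidate $\graph$-constrained generative models are nonlinear, and for them $\text{UATD}\neq\text{ATD}$, so one cannot simply invoke Theorem~\ref{theorem:tight-bound} by substitution. The crux is to verify that the extreme values of the $\text{UATD}$ functional over the collapsing feasible set $\{\theta:W_1(P_{\scm_\graph^\theta},P^n)\le\alpha_n\}$ converge to the population extremes — i.e., that $\text{UATD}$ is a continuous, regular functional of the generated distribution in precisely the sense required by the proof of Theorem~\ref{theorem:tight-bound} — after which Assumptions~\ref{assump:true-scm} and~\ref{assump:wass-ball} close the argument as $\alpha_n$ shrinks and the ball converges to $P$.
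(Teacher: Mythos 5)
Your first claim is proved exactly as in the paper: \autoref{appx:proof-ade} derives the same identity $\text{ATE}_{\scm_\graph^\theta}(d)=(d-t_0)\,\text{UATD}_{\scm_\graph^\theta}[t_0,d]$ by writing $Y_{\scm_\graph^\theta(T=d)}-Y_{\scm_\graph^\theta(T=t_0)}$ as $\int_{t_0}^{d}\partial Y_{\scm_\graph^\theta(T=t)}/\partial t\,dt$ and normalizing, and then observes that multiplying the objective by the positive constant $(d-t_0)$ over the common feasible set $\Theta^n$ leaves minimizers and maximizers unchanged. Up to that point the two arguments coincide, including the scaling relations between the estimated and optimal bounds.

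The issue is the tightness step, which you explicitly leave open (``the crux is to verify\dots''). The obstacle you cite --- that candidate $\graph$-constrained models are nonlinear, so $\text{UATD}\neq\text{ATD}$ for them and Theorem~\ref{theorem:tight-bound} cannot be invoked by substitution --- does not arise in the paper's theoretical framework. In the proof of Theorem~\ref{theorem:tight-bound} (\autoref{appx:proof-tight-bound}), a standing assumption is that when the true SCM is linear, the parameter search space $\Theta$ itself consists of \emph{linear} $\graph$-constrained SCMs of the form in eq.~\ref{eq:linear-scm-param}; the neural-network generator is only the practical instantiation, not the object of the theorem. Under that restriction, every candidate's response function is affine in $t$, so $\partial Y_{\scm_\graph^\theta(T=t)}/\partial t=g(\theta)$ for a continuous $g$ depending on neither $t$ nor $\vect{u}$; hence $\text{UATD}_{\scm_\graph^\theta}[t_0,d]=\text{ATD}_{\scm_\graph^\theta}=g(\theta)$ for \emph{all} candidates, not merely the true SCM, the objective becomes $(d-t_0)\,g(\theta)$, and the compactness/Bolzano--Weierstrass/continuous-mapping argument of Theorem~\ref{theorem:tight-bound} applies verbatim. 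This is exactly how the paper closes the proof. Your concern is a real one for nonlinear generator classes, but that regime lies outside the corollary's hypotheses; the paper treats it only heuristically, via the $\tilde{\mu}$-approximation discussed in the remark following the corollary. Once you add the observation that $\Theta$ is linear, your argument is complete and essentially identical to the paper's.
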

The proof is given in \autoref{appx:proof-ade}. 

\xhdr{Remark} ATD and ATE are generally different quantities and finding bounds on one does not necessarily results in bounds on the other. In fact, ATE is not pathwise differentiable for continuous treatments~\citep{diaz2013targeted,kennedy2017non,chernozhukov2018automatic} (See~\autoref{appx:ate-atd}). We define UATD as a quantity to relate ATE and ATD. The formal definition of UATD in Def.~\ref{def:uatd} is the same as ATE up to a scale factor, and one would expect to see similar issues with ATE here as well. Therefore, we approximate UATD with a version that, instead of a uniform distribution of treatment within interval $[t_0, d]$ with zero density outside, the treatment distribution has continuous differentiable non-zero density defined over the \textit{whole} support of $T$. More concretely,
\begin{align}
    \text{ATE}_{\scm_{\graph}^{\theta}}(d)~\propto~ \text{UATD}_{\scm_\graph^\theta}[t_0, d] &= \E_{\vect{u} \sim P_{\hat{\vect{U}}}}\left[\int_{supp(T)} \frac{\partial Y_{\scm_\graph^\theta}(T=t)}{\partial t} d\mu(t) \right] \nonumber \\
    &  {\boldsymbol{\approx}}~ \E_{\vect{u} \sim P_{\hat{\vect{U}}}}\left[\int_{supp(T)} \frac{\partial Y_{\scm_\graph^\theta}(T=t)}{\partial t} d\tilde{\mu}(t) \right]
\end{align}
where $\mu$ is a the uniform measure within interval $[t_0, d]$ and $\tilde{\mu}$ is its approximation with a non-zero density over the full support of $T$. Choosing $\tilde{\mu}$ trades off between regularity of the response function and the approximation error. The more $\tilde{\mu}$ is close to the uniform measure $\mu$, the more irregularity we allow in the response curve and the less informative the bound will be, while the objective function is closer to ATE. Adding more density outside of interval $[t_0, d]$ imposes regularity on the response curve (Figure~\ref{fig:uatd-ate}). 
\begin{figure}[t]
    \centering
    \includegraphics{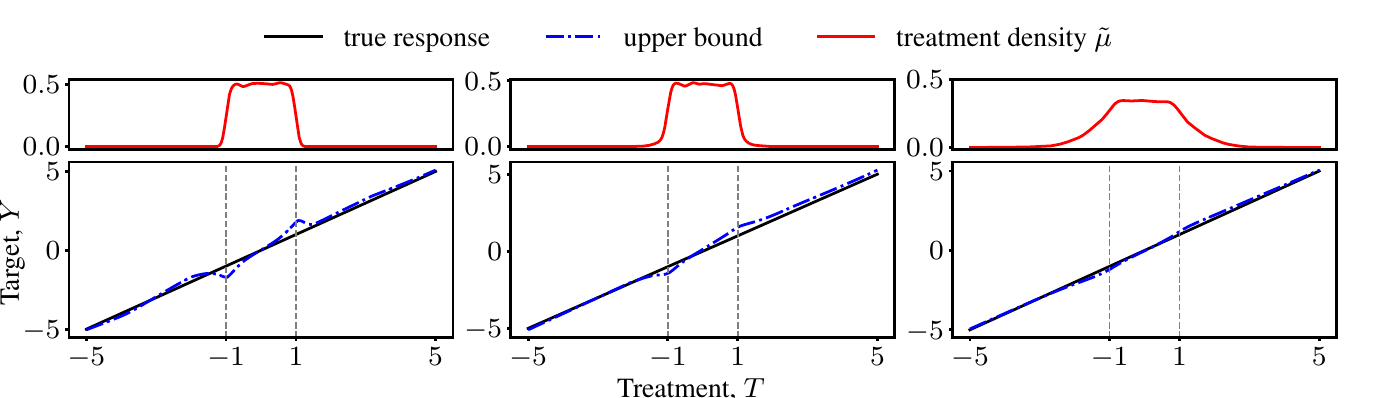}
    \caption{\small Maximizing the UATD between $T=-1$ and $T=1$ with approximation densities $\tilde{\mu}$. The standard deviation of $\tilde{\mu}$ increases from left to right, resulting in smoother response curves. All three plots are trained on a linear data generating process $Y = T + \mathcal{N}(0, 1)$ using $5{,}000$ samples}
    \label{fig:uatd-ate}
\end{figure}
It is important to note that we use the formal definition of UATD (with uniform treatment distribution) in Corollary~\ref{theorem:ade-theorem} since, in linear SCMs, it is not possible for the generator model to attain arbitrarily large values in points $t_0$ and $d$ as the derivative of linear functions remains fixed outside of interval $[t_0, d]$. This is why we state our theoretical results based on the uniform intervention.

\subsection{Our algorithm}
\label{subsec:algorithm}
We describe our algorithm to solve the optimization problem in eq.~\ref{opt:robust-PI}. 
We will focus on finding $\hat{\lowerATD}$, a similar approach can be taken for $\hat{\upperATD}$.
A general strategy is to convert the constrained problem to its unconstrained version using the method of Lagrange multiplier:
\begin{align}
    \min_{\theta} \max_{\lambda \geq 0} \text{ATD}_{\scm^\theta_\graph} +
    \lambda (W_1(P_{\scm_{\graph}^{\theta}}, P^n) - \alpha_n)
    \label{opt:lagrange}
\end{align}
As the Wasserstein distance is not differentiable, we cannot directly use gradient descent to solve eq.~\ref{opt:lagrange}. A common approach is to use the dual formulation of Wasserstein distance $W_1(P_{\scm_{\graph}^{\theta}}, P^n) = \max_{||q_{\phi}||_L\leq 1} \E_{P^n}[q_\phi(\vect{v})] - \E_{P_{\scm_{\graph}^{\theta}}}[q_\phi(\vect{v})]$ and solve eq.~\ref{opt:lagrange} using WGANs, a similar solution used in~\citet{hu2021generative}. However, this min-max-max formulation can result in unstable bounds as we show in our experiments.
Instead, we use Sinkhorn divergence, a differentiable approximation to Wasserstein distance, as the measure of distance between distributions and solve the following:
\begin{align}
    \min_{\theta} \max_{\lambda \geq 0} \text{ATD}_{\scm^\theta_\graph} +
    \lambda (S_\epsilon(P_{\scm_{\graph}^{\theta}}, P^n) - \alpha_n)
    \label{opt:lagrange-new}
\end{align}

To solve eq.~\ref{opt:lagrange-new}, we need to evaluate $\text{ATD}_{\scm_\graph^\theta}$ and calculate its gradient w.r.t. $\theta$.
As we are using $\graph$-constrained generative models, we can calculate the value of $Y_{\scm_\graph^\theta(T=t)}(\vect{u})$ by hard intervention $T=t$, i.e., fixing the output of function $f_T^\theta$ as $t$ and computing $Y$ through a topological order of calculations. 
Then, we estimate $\text{ATD}_{\scm_\graph^\theta}$ as follows:
\begin{align}
    \text{ATD}_{\scm_\graph^\theta} \approx \frac{1}{n} \sum_{i=1}^n \frac{1}{\epsilon} \left[Y_{\scm_\graph^\theta(T=t^{(i)}+\epsilon)}(\vect{u}^{(i)}) - Y_{\scm_\graph^\theta(T=t^{(i)})}(\vect{u}^{(i)})\right]
    \label{eq:calc-atd}
\end{align}
where $\{t^{(i)}\}_{i=1}^n$ are samples from the treatment variable, and $\{\vect{u}^{(i)}\}_{i=1}^n$ are the latent variables generated from a uniform distribution.
To choose an appropriate value of $\alpha_n$, we first train our generator without the ATD term until the Sinkhorn loss converges to some value and use that as our choice of $\alpha_n$. We then continue our training by adding the ATD term.

We note that using algorithms such as projected gradient descent to solve the constrained optimization problem requires us to project the weights of our network into the Wasserstein (Sinkhorn) ball in each step. This can be computationally infeasible, and current methods are mainly focused on special loss functions~\citep{mohajerin2018data,li2019first,wong2019wasserstein}. Instead, we consider an alternating optimization procedure, in which we alternate between updating the gradients for the ATD and the Sinkhorn loss. 
The full details of our algorithm, its extension to ATEs, and the alternating optimization are described in~\autoref{appx:algorithm}.

\section{Experiments}
We run our partial identification algorithms on a variety of simulated settings.
We mainly focus on the synthetic data generating processes as the ground truth must be known to evaluate our derived bounds properly.
\begin{figure}[t]
    \centering
    \includegraphics{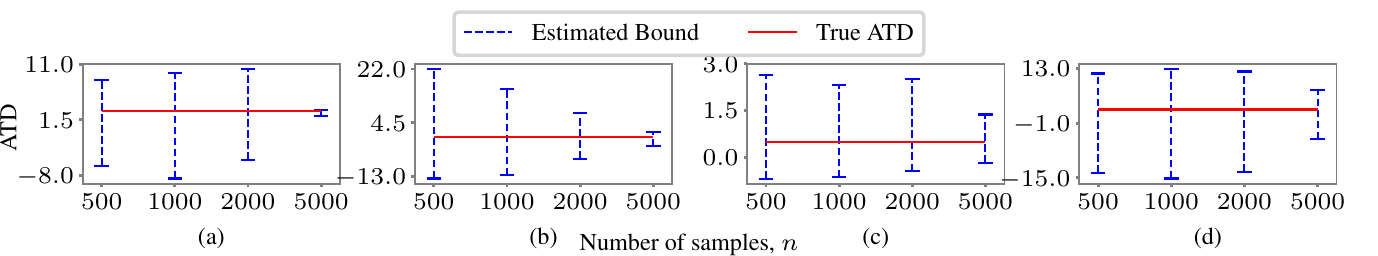}
    \caption{\small Our derived bounds on ATD for (a) linear Back-door, (b) Front-door, (c) linear IV and (d) leaky mediation settings. As the number of samples increases, our algorithm pin-points the ATD in identifiable settings and leads to tight bounds on it in non-identifiable cases.}
    \label{fig:atd}
\end{figure}
Our primary goal is to show that using uniform average treatment derivatives instead of directly optimizing the average treatment effect will result in tighter and more stable bounds. We first run our algorithm to estimate bounds on the value of ATDs for both identifiable and non-identifiable causal graphs. We show that, as the number of samples increases, our algorithm converges to tight bounds over the true value of ATD (Figure~\ref{fig:atd}).
We then focus on partial identification of ATEs
and demonstrate that using partial derivatives of the response function leads to more informative bounds, while being valid (Figures~\ref{fig:ATE},~\ref{fig:compare-flow}). 

Additionally, as a sanity check, we consider two binary-treatments datasets where the optimal bounds are known and show that our approach can reach the optimal solution. We also test our method on an ACIC dataset, a case study with real-world covariates, to illustrate the performance of our algorithm on higher-dimensional datasets. We include these additional experiments in~\autoref{appx:add-exp}. Our implementation details, as well as hyper-parameters can be found in~\autoref{appx:implementation}.
%We provide the details of the datasets, baselines, and our results in the following sections.
\begin{figure}[t]
    \centering
    \begin{subfigure}[b]{\linewidth}
    \centering
        \includegraphics[width=0.9\linewidth]{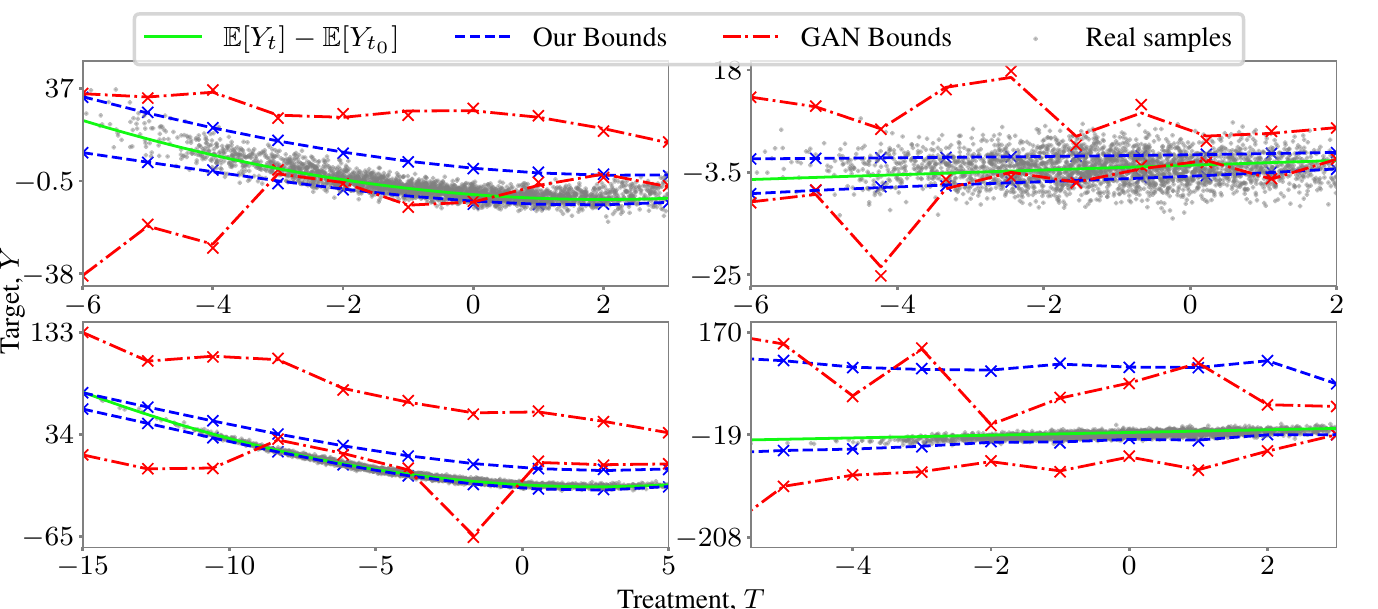}
    \caption{\small GAN baseline. (top left) Nonlinear Back-door, (top right) linear IV, (bottom left) nonlinear IV, and (bottom right) leaky mediation settings. $t_0$ is chosen as the maximum treatment value in each data. Our derived bounds are tighter and more stable than the GAN baseline, which directly optimizes the ATE.}
        \label{fig:ATE}
    \end{subfigure}

    ~\vspace{0.3em}
    
    \begin{subfigure}[b]{\linewidth}
        \centering
    \includegraphics[width=0.9\linewidth]{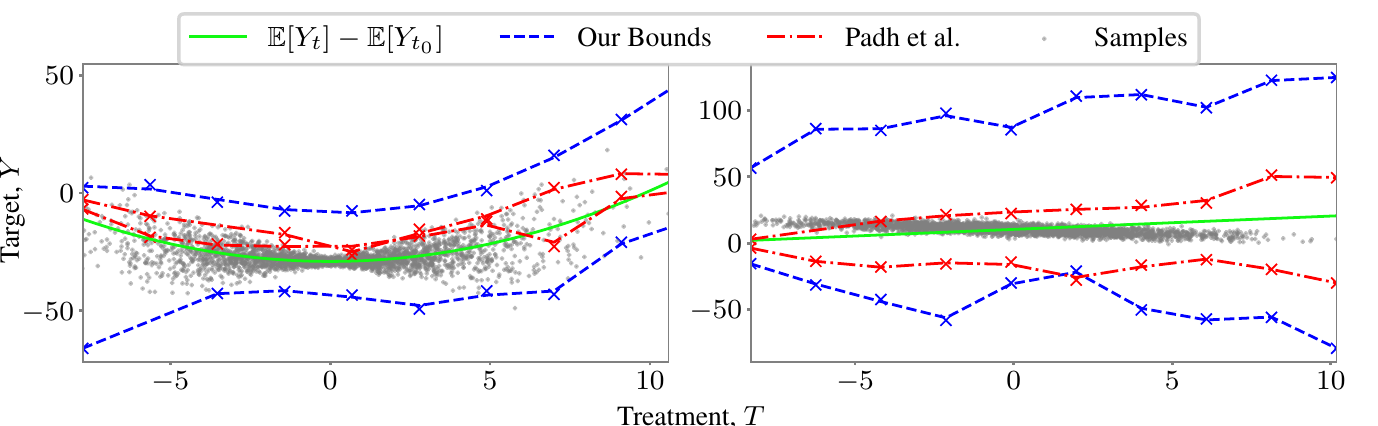}
    \caption{\small ~\citet{padh2022stochastic}. (left) Nonlinear IV, (right) linear IV with strong confounding. $t_0$ is chosen as the minimum treatment value in each data. While resulting in tighter bounds in the linear setting, the baseline leads to invalid bounds in the nonlinear dataset.}
    \label{fig:compare-flow}
    \end{subfigure}
    \vspace{0.1em}
    
    \caption{\small Comparing our results for partial identification of $\E[Y_{t}] - \E[Y_{t_0}]$ for $10$ different values of treatment.}
    % \vspace{-1em}
\end{figure}
\subsection{Datasets and Baseline}
\xhdr{Datasets}
 We consider various data generating processes for different causal graphs, including a linear SCM with three-dimensional covariates and a quadratic SCM with nonlinear interaction between the covariates and treatment for the Back-door causal graph (Figure~\ref{fig:backdoor-graph}), as well as a nonlinear SCM for the Front-door setting (Figure~\ref{fig:frontdoor-graph}).
 For the IV graph (Figure~\ref{fig:iv-graph}), we use four different SCMs, two linear and two nonlinear datasets, based on the strength of the instrument variable and the confounding. 
Finally, we generate a two-dimensional linear dataset for the leaky mediation causal graph (Figure~\ref{fig:leaky-graph}). The full details of our data-generating processes can be found in~\autoref{appx:DGPs}.
%

% \xhdr{Real-world datasets (ACIC or the other)}
% \vahid{@TODO}

\xhdr{Baselines}
Our first baseline is the algorithm in~\citet{hu2021generative} that directly optimizes the value of ATE using GANs. 
We use their default hyper-parameters with a tolerance of $0.0001$.  
Similar to their experimental setup, we consider $50$ intermediate solutions where the distance is within the tolerance and compute the bounds using the mean and one-sided confidence intervals.

We also compare our algorithm with the moment-matching method in~\citet{padh2022stochastic}. They parameterize the first two moments of the generated distribution and match them with the observed samples. They choose response functions as linear combinations of a fixed number of basis functions. We consider the neural basis functions for our experiments with their default hyper-parameters. In particular, we train a 3-hidden layer MLP with 64 neurons in each layer to learn the target variable given the treatment. We then use the activation of the $k$th neuron in the last hidden layer as the $k$th basis function. Instead of maximizing/minimizing $E[Y_{T=t}]$, we consider $E[Y_{T=t}] - E[Y_{T=t_0}]$ as our goal is to bound the ATE between two points. We leave other implementation details untouched.

\subsection{Results}
\xhdr{Bounding average treatment derivatives} We generate data with sample sizes $N = \{500, 1000, 2000, 5000\}$ from the nonlinear Front-door and linear Back-door SCMs (identifiable), as well as linear IV with strong confounding and leaky mediation settings (non-identifiable).
We run our algorithm with ten different random seeds for each setting/sample size.
Then, we choose the five runs with the lowest tolerance parameter $\alpha_n$ and choose the upper (lower) bound as the maximum (minimum) value of the ATD within these five runs.
% 
% The reason is that the value of $\alpha_N$ is not fixed within these different runs and is decided by the run of algorithm, where larger value of $\alpha_N$ naturally results in looser bounds.
%
Figure~\ref{fig:atd} shows our derived bounds.
As expected, the algorithm is able to point-identify the value of ATD for identifiable scenarios as the number of samples increases (Figures~\ref{fig:atd} (a) and (b)).
In non-identifiable cases,
our algorithm leads to tight bounds containing the true value of ATD by increasing the number of samples as depicted in Figures~\ref{fig:atd} (c) and (d). This is in line with our results in Theorem~\ref{theorem:tight-bound}.

\xhdr{Bounding average treatment effects}
Here, we aim to demonstrate the effectiveness of using partial derivatives for bounding the ATE compared to the direct optimization approach.
We consider six different settings and run our algorithm for $10$ different values of treatment $\{t_i\}_{i=1}^{10}$ in each setting. We compute the value of ATE w.r.t. a fixed point $t_0$. For each value of $T$, we generate $N = 5{,}000$ samples and run each experiment five times. Then, we select the maximum (minimum) value of ATE within the five runs as the upper (lower) bound. We follow the same procedure for the baselines.

To find the bounds on ATE using our approach, we sample from a distribution with uniform density within $[t_0, t_i]$ and Gaussian tails outside of $[t_0, t_i]$, and maximize/minimize the partial derivatives. Figure~\ref{fig:ATE} shows the effectiveness of this approach in comparison to the GAN baseline. Our algorithm produces stable and tight bounds containing the true value of ATE, while the GAN baseline, which relies on the direct optimization of ATEs, results in unstable loose bounds that may not include the true value of the treatment effect. 

Figure~\ref{fig:compare-flow} compares our algorithm with the method in~\citet{padh2022stochastic}. First, we find that their method do not contain the actual response curve in the nonlinear IV setting (left). This is not a surprising result; since they consider a fixed set of basis functions, the set of possible response curves is more restricted than ours. If the basis functions are not carefully designed to capture the true form of the response curve, the resulting bounds can be invalid, thus not containing the actual value of ATE even if there are regimes where the approach provides a tighter fit. On the other hand, our approach considers a larger family of response functions, enabling us to both capture shape and variation in the response curve while bounding the effects. 
\section{Conclusion, Limitations and Future Work}
Our work introduces a novel method to estimate bounds on average treatment effects from observational data. Specifically, we propose optimizing the average treatment derivative, which in turn can be used to estimate the average treatment effect in treatment response curves. Empirically we find that the use of our method recovers known bounds on treatment effects in the discrete case and outperforms other methods based on implicit models for partial identification in the continuous case. 

There remain several limitations of this work. Our work builds on the constrained optimization problem defined by \cite{causalneural} instantiated in the context of the ATD. Developing new methods for function maximization/minimization approaches under distributional constraints remains an important direction for future work. Our work primarily uses carefully designed synthetic datasets to evaluate our method under different constraints on the data distribution. A larger-scale evaluation of our approach on real-world benchmarks will better help us assess the method’s practicality. Moreover, the theory of our work is restricted to the linear SCM scenario. We have also made regularity assumptions throughout the paper, including Assumption 1, that the true SCM can be modeled using implicit generative models with uniform confounding distribution, as well as the approximation of UATD with regular treatment distributions. Consequently, practitioners must exercise caution when
deploying this method when there are nonlinear or irregular structures among the random variables. Finally, we have stated our results (Theorem 1 and Corollary 1) without quantifying the finite-sample estimation uncertainty. One can use the existing theory of finite-sample convergence of empirical Wasserstein distance in \cite{weed2019sharp} to extend our results with high-probability bounds.

\section*{Acknowledgments} 
We thank Tom Ginsberg and Michael Cooper for many helpful discussions. We also thank David Alvarez Melis for his suggestion on using Sinkhorn divergence instead of Wasserstein distance.
This research was supported by NSERC Discovery Award RGPIN-2022-04546 and a CIFAR AI Chair. Resources used in preparing this research were provided, in part, by the Province of Ontario, the Government of Canada through CIFAR, and companies sponsoring the Vector Institute.

\clearpage
\bibliography{main.bib}

\begin{thebibliography}{49}
\providecommand{\natexlab}[1]{#1}
\providecommand{\url}[1]{\texttt{#1}}
\expandafter\ifx\csname urlstyle\endcsname\relax
  \providecommand{\doi}[1]{doi: #1}\else
  \providecommand{\doi}{doi: \begingroup \urlstyle{rm}\Url}\fi

\bibitem[Arjovsky et~al.(2017)Arjovsky, Chintala, and Bottou]{wgan}
M.~Arjovsky, S.~Chintala, and L.~Bottou.
\newblock Wasserstein gan, 2017.
\newblock URL \url{https://arxiv.org/abs/1701.07875}.

\bibitem[Balke and Pearl(1997)]{balke-bound}
A.~Balke and J.~Pearl.
\newblock Bounds on treatment effects from studies with imperfect compliance.
\newblock \emph{Journal of the American Statistical Association}, 92\penalty0
  (439):\penalty0 1171--1176, 1997.
\newblock \doi{10.1080/01621459.1997.10474074}.
\newblock URL \url{https://doi.org/10.1080/01621459.1997.10474074}.

\bibitem[Callaway et~al.(2021)Callaway, Goodman-Bacon, and
  Sant'Anna]{callaway2021difference}
B.~Callaway, A.~Goodman-Bacon, and P.~H. Sant'Anna.
\newblock Difference-in-differences with a continuous treatment.
\newblock \emph{arXiv preprint arXiv:2107.02637}, 2021.

\bibitem[Chernozhukov et~al.(2018)Chernozhukov, Newey, and
  Singh]{chernozhukov2018automatic}
V.~Chernozhukov, W.~K. Newey, and R.~Singh.
\newblock Automatic debiased machine learning of causal and structural effects.
\newblock \emph{arXiv preprint arXiv:1809.05224}, 2018.

\bibitem[Cuturi(2013)]{cuturi2013sinkhorn}
M.~Cuturi.
\newblock Sinkhorn distances: Lightspeed computation of optimal transport.
\newblock \emph{Advances in neural information processing systems}, 26, 2013.

\bibitem[D{\'\i}az and van~der Laan(2013)]{diaz2013targeted}
I.~D{\'\i}az and M.~J. van~der Laan.
\newblock Targeted data adaptive estimation of the causal dose--response curve.
\newblock \emph{Journal of Causal Inference}, 1\penalty0 (2):\penalty0
  171--192, 2013.

\bibitem[Dua and Graff(2017)]{spambase}
D.~Dua and C.~Graff.
\newblock {UCI} machine learning repository, 2017.
\newblock URL \url{http://archive.ics.uci.edu/ml}.

\bibitem[Duarte et~al.(2021)Duarte, Finkelstein, Knox, Mummolo, and
  Shpitser]{automated}
G.~Duarte, N.~Finkelstein, D.~Knox, J.~Mummolo, and I.~Shpitser.
\newblock An automated approach to causal inference in discrete settings.
\newblock \emph{CoRR}, abs/2109.13471, 2021.
\newblock URL \url{https://arxiv.org/abs/2109.13471}.

\bibitem[D’Amour et~al.(2021)D’Amour, Ding, Feller, Lei, and
  Sekhon]{d2021overlap}
A.~D’Amour, P.~Ding, A.~Feller, L.~Lei, and J.~Sekhon.
\newblock Overlap in observational studies with high-dimensional covariates.
\newblock \emph{Journal of Econometrics}, 221\penalty0 (2):\penalty0 644--654,
  2021.

\bibitem[Evans(2012)]{robin-graphical-ineq}
R.~J. Evans.
\newblock Graphical methods for inequality constraints in marginalized dags.
\newblock In \emph{{IEEE} International Workshop on Machine Learning for Signal
  Processing, {MLSP} 2012, Santander, Spain, September 23-26, 2012}, pages
  1--6. {IEEE}, 2012.
\newblock \doi{10.1109/MLSP.2012.6349796}.
\newblock URL \url{https://doi.org/10.1109/MLSP.2012.6349796}.

\bibitem[Feydy et~al.(2019)Feydy, S{\'{e}}journ{\'{e}}, Vialard, Amari,
  Trouv{\'{e}}, and Peyr{\'{e}}]{sinkhorn}
J.~Feydy, T.~S{\'{e}}journ{\'{e}}, F.~Vialard, S.~Amari, A.~Trouv{\'{e}}, and
  G.~Peyr{\'{e}}.
\newblock Interpolating between optimal transport and {MMD} using sinkhorn
  divergences.
\newblock In K.~Chaudhuri and M.~Sugiyama, editors, \emph{The 22nd
  International Conference on Artificial Intelligence and Statistics, {AISTATS}
  2019, 16-18 April 2019, Naha, Okinawa, Japan}, volume~89 of \emph{Proceedings
  of Machine Learning Research}, pages 2681--2690. {PMLR}, 2019.
\newblock URL \url{http://proceedings.mlr.press/v89/feydy19a.html}.

\bibitem[Finkelstein et~al.(2021)Finkelstein, Adams, Saria, and
  Shpitser]{NoamFinkelstein}
N.~Finkelstein, R.~Adams, S.~Saria, and I.~Shpitser.
\newblock Partial identifiability in discrete data with measurement error.
\newblock In C.~P. de~Campos, M.~H. Maathuis, and E.~Quaeghebeur, editors,
  \emph{Proceedings of the Thirty-Seventh Conference on Uncertainty in
  Artificial Intelligence, {UAI} 2021, Virtual Event, 27-30 July 2021}, volume
  161 of \emph{Proceedings of Machine Learning Research}, pages 1798--1808.
  {AUAI} Press, 2021.
\newblock URL \url{https://proceedings.mlr.press/v161/finkelstein21b.html}.

\bibitem[Genevay et~al.(2018)Genevay, Peyr{\'{e}}, and Cuturi]{sinkhorn2}
A.~Genevay, G.~Peyr{\'{e}}, and M.~Cuturi.
\newblock Learning generative models with sinkhorn divergences.
\newblock In A.~J. Storkey and F.~P{\'{e}}rez{-}Cruz, editors,
  \emph{International Conference on Artificial Intelligence and Statistics,
  {AISTATS} 2018, 9-11 April 2018, Playa Blanca, Lanzarote, Canary Islands,
  Spain}, volume~84 of \emph{Proceedings of Machine Learning Research}, pages
  1608--1617. {PMLR}, 2018.
\newblock URL \url{http://proceedings.mlr.press/v84/genevay18a.html}.

\bibitem[Goodfellow et~al.(2014)Goodfellow, Pouget-Abadie, Mirza, Xu,
  Warde-Farley, Ozair, Courville, and Bengio]{goodfellow2014generative}
I.~Goodfellow, J.~Pouget-Abadie, M.~Mirza, B.~Xu, D.~Warde-Farley, S.~Ozair,
  A.~Courville, and Y.~Bengio.
\newblock Generative adversarial nets.
\newblock \emph{Advances in neural information processing systems}, 27, 2014.

\bibitem[Goudet et~al.(2017)Goudet, Kalainathan, Caillou, Guyon, Lopez-Paz, and
  Sebag]{goudet2017causal}
O.~Goudet, D.~Kalainathan, P.~Caillou, I.~Guyon, D.~Lopez-Paz, and M.~Sebag.
\newblock Causal generative neural networks.
\newblock \emph{arXiv preprint arXiv:1711.08936}, 2017.

\bibitem[Gruber et~al.(2019)Gruber, Lefebvre, Schuster, and Piché]{acic2019}
S.~Gruber, G.~Lefebvre, T.~Schuster, and A.~Piché.
\newblock Atlantic causal inference conference data challenge, 2019.
\newblock URL \url{https://sites.google.com/view/acic2019datachallenge/}.

\bibitem[Gunsilius(2019)]{gunsilius2019path}
F.~Gunsilius.
\newblock A path-sampling method to partially identify causal effects in
  instrumental variable models.
\newblock \emph{arXiv preprint arXiv:1910.09502}, 2019.

\bibitem[Gunsilius(2020)]{gunsilius-iv}
F.~F. Gunsilius.
\newblock {Nontestability of instrument validity under continuous treatments}.
\newblock \emph{Biometrika}, 108\penalty0 (4):\penalty0 989--995, 12 2020.
\newblock ISSN 0006-3444.
\newblock \doi{10.1093/biomet/asaa101}.
\newblock URL \url{https://doi.org/10.1093/biomet/asaa101}.

\bibitem[Guo et~al.(2022)Guo, Yin, Wang, and Jordan]{guo2022partial}
W.~Guo, M.~Yin, Y.~Wang, and M.~I. Jordan.
\newblock Partial identification with noisy covariates: A robust optimization
  approach.
\newblock \emph{arXiv preprint arXiv:2202.10665}, 2022.

\bibitem[Hirano and Imbens(2004)]{hirano2004propensity}
K.~Hirano and G.~W. Imbens.
\newblock The propensity score with continuous treatments.
\newblock \emph{Applied Bayesian modeling and causal inference from
  incomplete-data perspectives}, 226164:\penalty0 73--84, 2004.

\bibitem[Hu et~al.(2021)Hu, Wu, Zhang, and Wu]{hu2021generative}
Y.~Hu, Y.~Wu, L.~Zhang, and X.~Wu.
\newblock A generative adversarial framework for bounding confounded causal
  effects.
\newblock In \emph{Proceedings of the 35th AAAI Conference on Artificial
  Intelligence}, 2021.

\bibitem[Kennedy et~al.(2017)Kennedy, Ma, McHugh, and Small]{kennedy2017non}
E.~H. Kennedy, Z.~Ma, M.~D. McHugh, and D.~S. Small.
\newblock Non-parametric methods for doubly robust estimation of continuous
  treatment effects.
\newblock \emph{Journal of the Royal Statistical Society: Series B (Statistical
  Methodology)}, 79\penalty0 (4):\penalty0 1229--1245, 2017.

\bibitem[Kilbertus et~al.(2020)Kilbertus, Kusner, and Silva]{iv-continuous}
N.~Kilbertus, M.~J. Kusner, and R.~Silva.
\newblock A class of algorithms for general instrumental variable models.
\newblock In H.~Larochelle, M.~Ranzato, R.~Hadsell, M.~Balcan, and H.~Lin,
  editors, \emph{Advances in Neural Information Processing Systems 33: Annual
  Conference on Neural Information Processing Systems 2020, NeurIPS 2020,
  December 6-12, 2020, virtual}, 2020.
\newblock URL
  \url{https://proceedings.neurips.cc/paper/2020/hash/e8b1cbd05f6e6a358a81dee52493dd06-Abstract.html}.

\bibitem[Kocaoglu et~al.(2018)Kocaoglu, Snyder, Dimakis, and
  Vishwanath]{causalGAN}
M.~Kocaoglu, C.~Snyder, A.~G. Dimakis, and S.~Vishwanath.
\newblock Causalgan: Learning causal implicit generative models with
  adversarial training.
\newblock In \emph{6th International Conference on Learning Representations,
  {ICLR} 2018, Vancouver, BC, Canada, April 30 - May 3, 2018, Conference Track
  Proceedings}. OpenReview.net, 2018.
\newblock URL \url{https://openreview.net/forum?id=BJE-4xW0W}.

\bibitem[Li et~al.(2019)Li, Huang, and So]{li2019first}
J.~Li, S.~Huang, and A.~M.-C. So.
\newblock A first-order algorithmic framework for distributionally robust
  logistic regression.
\newblock \emph{Advances in Neural Information Processing Systems}, 32, 2019.

\bibitem[Makar et~al.(2020)Makar, Johansson, Guttag, and Sontag]{makermaggie}
M.~Makar, F.~D. Johansson, J.~V. Guttag, and D.~A. Sontag.
\newblock Estimation of bounds on potential outcomes for decision making.
\newblock In \emph{Proceedings of the 37th International Conference on Machine
  Learning, {ICML} 2020, 13-18 July 2020, Virtual Event}, volume 119 of
  \emph{Proceedings of Machine Learning Research}, pages 6661--6671. {PMLR},
  2020.
\newblock URL \url{http://proceedings.mlr.press/v119/makar20a.html}.

\bibitem[Manski(1990)]{manski-bound}
C.~F. Manski.
\newblock Nonparametric bounds on treatment effects.
\newblock \emph{The American Economic Review}, 80\penalty0 (2):\penalty0
  319--323, 1990.
\newblock ISSN 00028282.
\newblock URL \url{http://www.jstor.org/stable/2006592}.

\bibitem[Manski(2003)]{manski2003partial}
C.~F. Manski.
\newblock \emph{Partial identification of probability distributions}, volume~5.
\newblock Springer, 2003.

\bibitem[Miles et~al.(2015)Miles, Kanki, Meloni, and Tchetgen]{Miles2015OnPI}
C.~H. Miles, P.~J. Kanki, S.~Meloni, and E.~J.~T. Tchetgen.
\newblock On partial identification of the pure direct effect.
\newblock \emph{arXiv: Methodology}, 2015.

\bibitem[Mohajerin~Esfahani and Kuhn(2018)]{mohajerin2018data}
P.~Mohajerin~Esfahani and D.~Kuhn.
\newblock Data-driven distributionally robust optimization using the
  wasserstein metric: Performance guarantees and tractable reformulations.
\newblock \emph{Mathematical Programming}, 171\penalty0 (1):\penalty0 115--166,
  2018.

\bibitem[Padh et~al.(2022)Padh, Zeitler, Watson, Kusner, Silva, and
  Kilbertus]{padh2022stochastic}
K.~Padh, J.~Zeitler, D.~Watson, M.~Kusner, R.~Silva, and N.~Kilbertus.
\newblock Stochastic causal programming for bounding treatment effects.
\newblock \emph{arXiv preprint arXiv:2202.10806}, 2022.

\bibitem[Pearl(2009)]{causal-pearl}
J.~Pearl.
\newblock \emph{Causality: Models, Reasoning and Inference}.
\newblock Cambridge University Press, USA, 2nd edition, 2009.
\newblock ISBN 052189560X.

\bibitem[Powell et~al.(1989)Powell, Stock, and
  Stoker]{powell1989semiparametric}
J.~L. Powell, J.~H. Stock, and T.~M. Stoker.
\newblock Semiparametric estimation of index coefficients.
\newblock \emph{Econometrica: Journal of the Econometric Society}, pages
  1403--1430, 1989.

\bibitem[Ramsahai(2012)]{ramsahai-bounds}
R.~R. Ramsahai.
\newblock Causal bounds and observable constraints for non-deterministic
  models.
\newblock \emph{J. Mach. Learn. Res.}, 13\penalty0 (null):\penalty0 829–848,
  mar 2012.
\newblock ISSN 1532-4435.

\bibitem[Richardson et~al.(2014)Richardson, Hudgens, Gilbert, and
  Fine]{richardson2014nonparametric}
A.~Richardson, M.~G. Hudgens, P.~B. Gilbert, and J.~P. Fine.
\newblock Nonparametric bounds and sensitivity analysis of treatment effects.
\newblock \emph{Statistical science: a review journal of the Institute of
  Mathematical Statistics}, 29\penalty0 (4):\penalty0 596, 2014.

\bibitem[Ritz et~al.(2015)Ritz, Baty, Streibig, and Gerhard]{ritz2015dose}
C.~Ritz, F.~Baty, J.~C. Streibig, and D.~Gerhard.
\newblock Dose-response analysis using r.
\newblock \emph{PloS one}, 10\penalty0 (12):\penalty0 e0146021, 2015.

\bibitem[Robins(1989)]{robin-bound}
J.~M. Robins.
\newblock The analysis of randomized and non-randomized aids treatment trials
  using a new approach to causal inference in longitudinal studies.
\newblock \emph{Health service research methodology: a focus on AIDS}, pages
  113--159, 1989.

\bibitem[Rothenh{\"a}usler and Yu(2019)]{rothenhausler2019incremental}
D.~Rothenh{\"a}usler and B.~Yu.
\newblock Incremental causal effects.
\newblock \emph{arXiv preprint arXiv:1907.13258}, 2019.

\bibitem[Sauer and Geiger(2021)]{cntr-generative-networks}
A.~Sauer and A.~Geiger.
\newblock Counterfactual generative networks.
\newblock In \emph{9th International Conference on Learning Representations,
  {ICLR} 2021, Virtual Event, Austria, May 3-7, 2021}. OpenReview.net, 2021.
\newblock URL \url{https://openreview.net/forum?id=BXewfAYMmJw}.

\bibitem[Villani(2009)]{villani2009optimal}
C.~Villani.
\newblock \emph{Optimal transport: old and new}, volume 338.
\newblock Springer, 2009.

\bibitem[Weed and Bach(2019)]{weed2019sharp}
J.~Weed and F.~Bach.
\newblock Sharp asymptotic and finite-sample rates of convergence of empirical
  measures in wasserstein distance.
\newblock \emph{Bernoulli}, 25\penalty0 (4A):\penalty0 2620--2648, 2019.

\bibitem[Wong et~al.(2019)Wong, Schmidt, and Kolter]{wong2019wasserstein}
E.~Wong, F.~Schmidt, and Z.~Kolter.
\newblock Wasserstein adversarial examples via projected sinkhorn iterations.
\newblock In \emph{International Conference on Machine Learning}, pages
  6808--6817. PMLR, 2019.

\bibitem[Wooldridge(2005)]{wooldridge2005unobserved}
J.~M. Wooldridge.
\newblock Unobserved heterogeneity and estimation of average partial effects.
\newblock \emph{Identification and inference for econometric models: Essays in
  honor of Thomas Rothenberg}, pages 27--55, 2005.

\bibitem[Xia et~al.(2021)Xia, Lee, Bengio, and Bareinboim]{causalneural}
K.~Xia, K.~Lee, Y.~Bengio, and E.~Bareinboim.
\newblock The causal-neural connection: Expressiveness, learnability, and
  inference.
\newblock \emph{CoRR}, abs/2107.00793, 2021.
\newblock URL \url{https://arxiv.org/abs/2107.00793}.

\bibitem[Yoon et~al.(2018)Yoon, Jordon, and Van Der~Schaar]{yoon2018ganite}
J.~Yoon, J.~Jordon, and M.~Van Der~Schaar.
\newblock Ganite: Estimation of individualized treatment effects using
  generative adversarial nets.
\newblock In \emph{International Conference on Learning Representations}, 2018.

\bibitem[Zhang and Bareinboim(2021{\natexlab{a}})]{zhang2021bounding}
J.~Zhang and E.~Bareinboim.
\newblock Bounding causal effects on continuous outcome.
\newblock In \emph{Proceedings of the AAAI Conference on Artificial
  Intelligence}, volume~35, 2021{\natexlab{a}}.

\bibitem[Zhang and Bareinboim(2021{\natexlab{b}})]{zhang2021non}
J.~Zhang and E.~Bareinboim.
\newblock Non-parametric methods for partial identification of causal effects.
\newblock \emph{Columbia CausalAI Laboratory Technical Report},
  2021{\natexlab{b}}.

\bibitem[Zhang et~al.(2021)Zhang, Tian, and Bareinboim]{partial-ident-elias}
J.~Zhang, J.~Tian, and E.~Bareinboim.
\newblock Partial counterfactual identification from observational and
  experimental data.
\newblock \emph{CoRR}, abs/2110.05690, 2021.
\newblock URL \url{https://arxiv.org/abs/2110.05690}.

\bibitem[Zhao and Hastie(2021)]{zhao2021causal}
Q.~Zhao and T.~Hastie.
\newblock Causal interpretations of black-box models.
\newblock \emph{Journal of Business \& Economic Statistics}, 39\penalty0
  (1):\penalty0 272--281, 2021.

\end{thebibliography}

\clearpage
\textbf{Paper checklist}

\begin{enumerate}
\item 
\begin{enumerate}
   \item Do the main claims made in the abstract and introduction accurately reflect the paper's contributions and scope?
    \answerYes{} \emph{We have listed the key contributions of our work in the introduction. In the experimental section we verify our theoretical contributions and perform a comparison to the closest baseline. Our theoretical results are in  idealized linear settings though we do experiment beyond that in our work. We highlight the limitations of our theoretical results and explain the rationale for the approximations that we make in practice. }
  \item Did you describe the limitations of your work?
    \answerYes{} \emph{We explicitly highlight the limitations of our work in the context of related work, in our exposition of the theoretical results and the computation necessary for our method. Our theoretical results make explicit the assumptions that we make and full proofs are presented in the supplementary material.}
  \item Did you discuss any potential negative societal impacts of your work?
    \answerYes{} \emph{Our paper explicitly tackles the problem of bounding treatment effects from observational data. We have highlighted that our theory is focused on the case of linear models and provided empirical results supporting the method's utility in the general, non-linear case. If the method is deployed in high-risk settings such as healthcare, there may  be times where the method fails to estimate bounds correctly from real-data in non-linear settings. We have encouraged practitioners to excercise caution in this regard.}
  \item Have you read the ethics review guidelines and ensured that your paper conforms to them?
    \answerYes{}
\end{enumerate}

\item If you are including theoretical results...
\begin{enumerate}
  \item Did you state the full set of assumptions of all theoretical results?
    \answerYes{}
        \item Did you include complete proofs of all theoretical results?
    \answerYes{} \emph{We include the complete proofs of our results in the supplementary material.}
\end{enumerate}

\item If you ran experiments...
\begin{enumerate}
  \item Did you include the code, data, and instructions needed to reproduce the main experimental results (either in the supplemental material or as a URL)?
    \answerYes{}
  \item Did you specify all the training details (e.g., data splits, hyperparameters, how they were chosen)?
    \answerYes{} \emph{ Our supplemental material makes clear the training procedure for our method, the main sets of hyperparameters and any heuristics for optimizing the models. We will release code to reproduce our results.  Our work made no use of participants. }
        \item Did you report error bars (e.g., with respect to the random seed after running experiments multiple times)?
    \answerYes{} \emph{Our method finds maximum/minimum values on the quantity of interest. Instead of reporting error bars, we consider maximum/minimum values among multiple runs with different random seeds.}
        \item Did you include the total amount of compute and the type of resources used (e.g., type of GPUs, internal cluster, or cloud provider)?
    \answerYes{} \emph{Experiments were run on two a single machine with 2 Nvidia RTX A5000 GPUs, 2 RTX A6000 cards GPUs, 96 CPUs and 188GB of RAM and on a larger internal cluster comprising hundreds of GPUs.}
\end{enumerate}

\item If you are using existing assets (e.g., code, data, models) or curating/releasing new assets...
\begin{enumerate}
  \item If your work uses existing assets, did you cite the creators?
    \answerYes{}
  \item Did you mention the license of the assets?
    \answerYes{} \emph{Our evaluation is primarily on synthetic data for which we have cited the original authors who created them.}
  \item Did you include any new assets either in the supplemental material or as a URL?
    \answerNo{}
  \item Did you discuss whether and how consent was obtained from people whose data you're using/curating?
    \answerNA{} 
  \item Did you discuss whether the data you are using/curating contains personally identifiable information or offensive content?
    \answerNA{} \emph{We use publicly available datasets that do not contain unanonymized personal data.}
\end{enumerate}

\end{enumerate}

\newpage
\appendix
% \addcontentsline{toc}{section}{Appendix} % Add the appendix text to the document TOC
% \part{} % Start the appendix part
% \parttoc % Insert the appendix TOC

\section{Proof of Theorem~\ref{theorem:tight-bound}} \label{appx:proof-tight-bound}
Here, we only focus on the minimization problem, i.e., $\hat{\lowerATD} \to \lowerATD$.
The proof of $\hat{\upperATD} \to \upperATD$ will similarly follow.
\subsection*{Assumptions}
Before stating our proof, we list the assumptions needed as follows. These are mainly technical assumptions that will simplify the derivations.
\begin{enumerate}
    \item In this proof we assume all random variables are one-dimensional.~\footnote{For high-dimensional variables, if the node-level causal graph is given, i.e., the relation between each dimension of variables, we can convert each multi-dimensional variable to multiple one-dimensional ones and follow the same proof. If the node-level causal graph is unknown and there is no inter-dependence between each dimension, one can follow the same proof technique by assuming $V_i$ as a vector in eq.~\ref{eq:linear-scm-param}. We leave these extensions as future work.}
    \item Since the true SCM $\scm$ is $\graph$-constrained (Assumption~\ref{assump:true-scm}) and linear, we will consider linear $\graph$-constrained SCMs as our parameter search space. More concretely, each random variable $V_i$ in an SCM $\scm_\graph^\theta$ can be written in the following form:
    \begin{align}
        V_i = \bm{\theta}^\top_{V_i} \vect{pa}(V_i) + \hat{\bm{\theta}}^\top_{V_i} \hat{\vect{U}}_{\vect{C}_i}
        \label{eq:linear-scm-param}
    \end{align}
    where $\bm{\theta}_{V_i} \in \R^{|\vect{pa}(V_i)|}$, $\hat{\bm{\theta}}_{V_i} \in \R^{|\hat{\vect{U}}_{\vect{C}_i}|}$, and $\theta \in \Theta$ is the concatenation of all $\bm{\theta}_{V_i}, \hat{\bm{\theta}}_{V_i}$ in a topological order of $V_i$s. Note that $\theta \in \R^{|E|}$, where $E$ is the set of edges in the causal graph $\graph$, containing edges to both observed and unobserved random variables.
    \item We assume the set of feasible parameters $\Theta$ is a bounded closed subset of $\R^{|E|}$. In practice, even in non-identifiable cases with infinite bounds, we use regularization to ensure the parameters of the network are bounded.
    \item The induced probability over observed random variables $P_{\scm_\graph^\theta}$, which we will write as $P_\theta$, and the true distribution $P$ belong to the Wasserstein space of order $p=1$, which we refer to as $\mathcal{P}_1$. In other words, $\int_{\R^d} |\vect{v}_0 - \vect{v}|\, dP_\theta(\vect{v}) < +\infty$ for any arbitrary point $\vect{v}_0 \in \R^d$. Again, in practice, since the support of all observed random variables is bounded, all probability distributions defined on them belong to the Wasserstein space.
    
\end{enumerate}
Before the main proof, we will state and prove the following useful lemmas.
\begin{lemma}
    Let define the set of feasible parameters for $n$ number of samples as $\Theta^n = \{\theta \in \Theta;\; W_1(P_{\scm_\graph^\theta}, P^n) \leq \alpha_n\}$.
    Then,
    $P_{\scm_\graph^{\theta^n}} \to P$ weakly for every sequence of $\theta^n \in \Theta^n$.
    \label{lemma:convergence}
\end{lemma}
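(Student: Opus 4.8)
The plan is to prove the lemma by establishing the stronger statement that $W_1(P_{\scm_\graph^{\theta^n}}, P) \to 0$, and then invoking the standard fact that convergence in the $1$-Wasserstein metric on $\mathcal{P}_1$ implies weak convergence. The entire argument rests on a single application of the triangle inequality for $W_1$, routing the distance from the candidate model to the truth through the empirical distribution $P^n$. The key observation that makes the conclusion hold for \emph{every} sequence $\theta^n \in \Theta^n$ is that the relevant bound will be uniform over the feasible set.

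First I would record two ingredients. The constraint defining $\Theta^n$ gives, for every $\theta^n \in \Theta^n$, the uniform bound $W_1(P_{\scm_\graph^{\theta^n}}, P^n) \leq \alpha_n$, which holds simultaneously for all members of $\Theta^n$. Second, I would invoke the convergence of empirical measures in Wasserstein distance: since $P \in \mathcal{P}_1$ (assumed above, and guaranteed here by the bounded support of the observed variables), $W_1(P^n, P) \to 0$ almost surely as $n \to \infty$ (e.g.\ via the rates of \citet{weed2019sharp} already cited in the paper). I would also fix the reading of Assumption~\ref{assump:wass-ball} under which $\alpha_n$ is taken to track this rate, so that $\alpha_n \to 0$; this is the intended choice, since the tightest admissible $\alpha_n$ with $W_1(P, P^n) \le \alpha_n$ is $W_1(P, P^n)$ itself, which vanishes, and it is precisely the regime in which the derived bounds can be tight.

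Combining these via the triangle inequality yields, for any $\theta^n \in \Theta^n$,
\[
W_1\!\left(P_{\scm_\graph^{\theta^n}}, P\right) \leq W_1\!\left(P_{\scm_\graph^{\theta^n}}, P^n\right) + W_1\!\left(P^n, P\right) \leq \alpha_n + W_1\!\left(P^n, P\right) \longrightarrow 0.
\]
Since the right-hand side does not depend on which $\theta^n \in \Theta^n$ is selected, the convergence is uniform over the choice of sequence, as required. Finally, $W_1$-convergence implies weak convergence on $\mathcal{P}_1$ (the Wasserstein-$1$ metric metrizes weak convergence together with convergence of first moments), giving $P_{\scm_\graph^{\theta^n}} \to P$ weakly. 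The only genuinely delicate point is ensuring that $\alpha_n \to 0$ remains compatible with keeping $P$ feasible for all $n$; everything else follows directly from standard Wasserstein facts, so I expect no substantive obstacle beyond stating the empirical-convergence result precisely and being explicit about the shrinking-$\alpha_n$ convention.
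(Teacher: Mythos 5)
Your proposal is correct and follows essentially the same route as the paper's proof: the triangle inequality through $P^n$, the convention that $\alpha_n$ can be chosen so that $\alpha_n \to 0$ while keeping $P$ feasible, and the fact that $W_1$ metrizes weak convergence on $\mathcal{P}_1$. The only cosmetic difference is that you cite the empirical-measure convergence rate of \citet{weed2019sharp} where the paper invokes \citet{villani2009optimal}, and you make the shrinking-$\alpha_n$ convention slightly more explicit — a point the paper's proof also relies on but states more briefly.
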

\begin{proof}
    First, note that the empirical distribution $P^n$ weakly converges to $P$ as $n \to \infty$. Since $W_1$ metrizes $\mathcal{P}_1$, we have $W_1(P^n, P) \to 0$~\citep{villani2009optimal}. 
    Hence, we can choose the sequence of $\alpha_n$ for the distributional constraint, such that $W_1(P, P^n) \leq \alpha_n$ and $\alpha_n \to 0$ as $n \to \infty$.
    % %
    For any parameter $\theta^n \in \Theta^n$, we have $W_1(P_{\scm_\graph^{\theta^n}}, P^n) \leq \alpha_n\}$. Therefore, $W_1(P_{\scm_\graph^{\theta^n}}, P^n) \to 0$. Using the triangle inequality, we have
    \begin{align}
       W_1(P_{\scm_\graph^{\theta^n}}, P) \leq W_1(P_{\scm_\graph^{\theta^n}}, P^n)  + W_1(P^n, P)
    \end{align}
    Therefore, $W_1(P_{\scm_\graph^{\theta^n}}, P) \to 0$ or equivalently, $P_{\scm_\graph^{\theta^n}} \to P$ weakly.
\end{proof}
\begin{lemma}
    For any linear $\graph$-constrained SCM $\scm_\graph^\theta$, we have
    \begin{align}
        \vect{V}_\theta = \vect{A}(\theta) \hat{\vect{U}}
    \end{align}
    where each element in the matrix $\vect{A}(\theta) \in \R^{d \times |dim(\hat{\vect{U}})|}$ is a continuous function of $\theta$.
    \label{lemma:continuous-func}
\end{lemma}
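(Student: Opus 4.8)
The plan is to exploit the acyclicity of $\graph$ together with the linearity of the structural equations to write the whole system in matrix form and solve it explicitly. First I would fix a topological order $V_1, \dots, V_d$ of the observed nodes, so that every parent of $V_i$ precedes it. In this order the defining equation $V_i = \bm{\theta}^\top_{V_i} \vect{pa}(V_i) + \hat{\bm{\theta}}^\top_{V_i} \hat{\vect{U}}_{\vect{C}_i}$ exhibits $V_i$ as a linear function of $\{V_j : j < i\}$ and of a subset of the latent coordinates. Collecting these relations I would write
\[
\vect{V}_\theta = \vect{B}(\theta)\,\vect{V}_\theta + \vect{C}(\theta)\,\hat{\vect{U}},
\]
where $B_{ij}(\theta)$ is the coefficient of $V_j$ in the equation for $V_i$ (the corresponding entry of $\bm{\theta}_{V_i}$, or $0$ when $V_j \notin \vect{pa}(V_i)$) and $C_{ik}(\theta)$ is the coefficient of the $k$-th latent in the equation for $V_i$. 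By the topological ordering $\vect{B}(\theta)$ is strictly lower triangular, and every entry of $\vect{B}(\theta)$ and $\vect{C}(\theta)$ is either $0$ or a single coordinate of $\theta$, hence trivially continuous in $\theta$.

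The next step is to solve for $\vect{V}_\theta$. Since $\vect{B}(\theta)$ is strictly lower triangular it is nilpotent with $\vect{B}(\theta)^d = 0$, so $\vect{I} - \vect{B}(\theta)$ is invertible with the terminating Neumann series $(\vect{I} - \vect{B}(\theta))^{-1} = \sum_{k=0}^{d-1} \vect{B}(\theta)^k$. Rearranging the matrix equation gives $\vect{V}_\theta = (\vect{I} - \vect{B}(\theta))^{-1}\vect{C}(\theta)\,\hat{\vect{U}}$, so setting $\vect{A}(\theta) := (\vect{I} - \vect{B}(\theta))^{-1}\vect{C}(\theta) \in \R^{d \times dim(\hat{\vect{U}})}$ yields the claimed representation. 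For continuity, each entry of $\vect{B}(\theta)^k$ is a polynomial in the entries of $\vect{B}(\theta)$ and hence a polynomial in $\theta$; summing the finitely many terms of the Neumann series keeps the entries of $(\vect{I} - \vect{B}(\theta))^{-1}$ polynomial, and multiplying by $\vect{C}(\theta)$, whose entries are affine in $\theta$, preserves continuity since finite sums and products of continuous scalar functions are continuous. This gives continuity of every entry of $\vect{A}(\theta)$.

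I do not expect a serious obstacle here: the only point requiring care is verifying that $\vect{B}(\theta)$ is strictly lower triangular under the topological order, which is exactly where acyclicity of $\graph$ enters and which guarantees invertibility of $\vect{I} - \vect{B}(\theta)$ for \emph{every} $\theta \in \Theta$ rather than only generically. Because nilpotency turns the inverse into a finite matrix polynomial, continuity is immediate and no determinant-nonvanishing or implicit-function argument is needed; this is the feature that makes the lemma clean and is the key fact I would emphasize.
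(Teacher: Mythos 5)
Your proof is correct and reaches the same representation as the paper, but by a genuinely different route. The paper argues by induction along the topological order: assuming $V_1,\dots,V_{d-1}$ have already been written as $\bm{\phi}_{V_i}^\top \hat{\vect{U}}$ with each $\bm{\phi}_{V_i}$ continuous in $\theta$, it substitutes these expressions into the structural equation for $V_d$ and observes that the resulting coefficient vector $\bm{\phi}_{V_d}$ is a linear (hence continuous) function of $\bm{\theta}_{V_d}$, $\hat{\bm{\theta}}_{V_d}$, and the earlier $\bm{\phi}_{V_i}$; the matrix $\vect{A}(\theta)$ is then assembled row by row. You instead write the entire system at once as $\vect{V}_\theta = \vect{B}(\theta)\vect{V}_\theta + \vect{C}(\theta)\hat{\vect{U}}$ and solve it globally, using strict lower-triangularity of $\vect{B}(\theta)$ under the topological order (which is exactly where acyclicity enters in both arguments) to obtain the terminating Neumann series $(\vect{I}-\vect{B}(\theta))^{-1}=\sum_{k=0}^{d-1}\vect{B}(\theta)^k$. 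At bottom the two proofs perform the same computation --- unrolling the paper's induction is precisely expanding your Neumann series --- but each presentation buys something: yours gives the explicit closed form $\vect{A}(\theta)=\sum_{k=0}^{d-1}\vect{B}(\theta)^k\vect{C}(\theta)$, whose entries are visibly \emph{polynomials} in $\theta$, a conclusion slightly stronger than continuity that would also make the continuity of $g(\theta)$ invoked in the proof of Theorem~\ref{theorem:tight-bound} immediate; the paper's induction is more elementary, needs no matrix-inversion machinery, and directly mirrors how the generative model actually produces samples by traversing $\graph$ in topological order. Your closing remark is also on point: because nilpotency makes the inverse a finite matrix polynomial, no determinant-nonvanishing or generic-position argument is needed, and the representation holds for every $\theta\in\Theta$, which is what the subsequent limiting argument in Theorem~\ref{theorem:tight-bound} requires.
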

\begin{proof}
    Consider a topological order of observed variables $(V_1, \cdots, V_d)$. We prove the result by induction. For $V_1$, from eq.~\ref{eq:linear-scm-param}, we have
    \begin{align}
        V_1 = 0 + \hat{\bm{\theta}}^\top_{V_1} \hat{\vect{U}}_{\vect{C}_1}=  {\bm{\phi}}^\top_{V_1} \hat{\vect{U}}
    \end{align}
    where the elements of ${\bm{\phi}}_{V_1}$ matches $\hat{\bm{\theta}}_{V_1}$ for $\hat{\vect{U}}_{\vect{C}_1}$ and equal to zero for $\hat{\vect{U}}\backslash \hat{\vect{U}}_{\vect{C}_1}$. Now, assume all variables ${V_1, \cdots, V_{d-1}}$ can be written as $\bm{\phi}_{V_i}^\top \hat{\vect{U}}$, where $\bm{\phi}_{V_i}$ is a continuous function of $\theta$. Then, from eq.~\ref{eq:linear-scm-param},
    \begin{align}
        V_d = \bm{\theta}^\top_{V_d} \vect{pa}(V_d) + \hat{\bm{\theta}}^\top_{V_d} \hat{\vect{U}}_{\vect{C}_d} = \bm{\theta}^\top_{V_d} (\bm{\phi}_{pa_1(V_d)}^\top \hat{\vect{U}}, \cdots, \bm{\phi}_{pa_r(V_{d})}^\top \hat{\vect{U}})^\top + \hat{\bm{\theta}}^\top_{V_d} \hat{\vect{U}}_{\vect{C}_d} = \bm{\phi}_{V_d}^\top \hat{\vect{U}}
    \end{align}
    where $\bm{\phi}_{V_d}$ is a linear function of $\bm{\theta}_{V_d}, \bm{\phi}_{V_1}, \cdots, \bm{\phi}_{V_{d-1}}$, and $\hat{\bm{\theta}}_{V_d}$. Defining matrix $\vect{A}(\theta)$ with rows $ \bm{\phi}_{V_i}$ concludes the proof.
\end{proof}
Now, we are ready to prove the main result. \begin{theorem}[Tight Bounds]
    Assume the dataset $\{\vect{v}^{(1)}, \cdots, \vect{v}^{(n)}\}$ is generated from a linear SCM.
    Then, under assumptions~\ref{assump:true-scm}, and~\ref{assump:wass-ball}, the solution
    to the constrained optimization problem in eq.~\ref{opt:robust-PI} converges to the optimal bound over the ATD in infinite samples,
    i.e., $(\hat{\lowerATD}, \hat{\upperATD}) \to (\lowerATD, \upperATD)$.
\end{theorem}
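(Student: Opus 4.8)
The plan is to establish the two one-sided statements $\limsup_n \hat{\lowerATD} \le \lowerATD$ and $\liminf_n \hat{\lowerATD} \ge \lowerATD$ separately, reading the optimal bound $\lowerATD$ as the minimum of the ATD over the class of linear $\graph$-constrained SCMs consistent with $P$ (the class that, by Assumption 1, contains the data-generating model). The first fact I would record is that in a linear SCM the response map $t \mapsto Y_{\scm_\graph^\theta(T=t)}(\vect{u})$ is affine in $t$ with a slope independent of $\vect{u}$; by Lemma 2 this slope is the relevant total treatment-to-outcome path coefficient read off from $\vect{A}(\theta)$, so $\text{ATD}_{\scm_\graph^\theta} = g(\theta)$ for a continuous (indeed polynomial) function $g$. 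I would also note that $\theta \mapsto P_\theta$ is weakly continuous, since $\vect{V}_\theta = \vect{A}(\theta)\hat{\vect{U}}$ is a continuous pushforward of the fixed uniform noise $\hat{\vect{U}}$, and that on the compact parameter set $\Theta$ the constraint $\theta \mapsto W_1(P_\theta, P^n)$ is continuous, so each $\Theta^n$ is compact and the constrained minimum $\hat{\lowerATD}=\min_{\theta\in\Theta^n} g(\theta)$ is attained.

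For the upper inequality, let $\theta^*$ attain $\lowerATD = \min\{g(\theta): P_\theta = P\}$, a set that is nonempty by Assumption 1. Since $P_{\theta^*} = P$, Assumption 2 gives $W_1(P_{\theta^*}, P^n) = W_1(P, P^n) \le \alpha_n$, so $\theta^* \in \Theta^n$ for every $n$; hence $\hat{\lowerATD} \le g(\theta^*) = \lowerATD$ for all $n$. This direction is essentially immediate: feasibility of the true minimizer is guaranteed for free by the distributional constraint.

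For the lower inequality, which is the crux, I would argue by contradiction using compactness. Let $\theta^n \in \Theta^n$ attain $\hat{\lowerATD} = g(\theta^n)$ and suppose that along some subsequence $g(\theta^{n_k}) \to L < \lowerATD$. By compactness of $\Theta$ pass to a further subsequence with $\theta^{n_k} \to \bar\theta \in \Theta$. Then Lemma 1 gives $P_{\theta^{n_k}} \to P$ weakly, while weak continuity of $\theta \mapsto P_\theta$ gives $P_{\theta^{n_k}} \to P_{\bar\theta}$; uniqueness of weak limits forces $P_{\bar\theta} = P$, so $\bar\theta$ is feasible for the limiting problem and $g(\bar\theta) \ge \lowerATD$. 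But continuity of $g$ gives $g(\theta^{n_k}) \to g(\bar\theta)$, contradicting $g(\theta^{n_k}) \to L < \lowerATD$. Combining the two inequalities yields $\hat{\lowerATD} \to \lowerATD$, and the identical argument with $\max$ replacing $\min$ gives $\hat{\upperATD} \to \upperATD$.

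The main obstacle is exactly this lower-bound direction: it rests on reconciling the two distinct senses in which $P_{\theta^{n_k}}$ converges — the convergence $P_{\theta^{n_k}}\to P$ forced by the shrinking Wasserstein constraint (Lemma 1) and the convergence $P_{\theta^{n_k}}\to P_{\bar\theta}$ forced by parameter continuity (Lemma 2) — in order to certify that the limiting parameter $\bar\theta$ is itself \emph{exactly} consistent with $P$. A secondary point I would make explicit is the identification of the comparison class: the definition of $\lowerATD$ quantifies over all of $\mathfrak{M}$, so to match the theorem statement verbatim one must argue, using Assumption 1 together with linearity, that restricting the search to linear $\graph$-constrained SCMs does not change the optimal value; I would flag this and either absorb it into the modeling assumptions or verify that no SCM consistent with $P$ and $\graph$ attains a strictly smaller ATD.
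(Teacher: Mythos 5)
Your proposal follows essentially the same route as the paper's proof: the easy direction via feasibility of the exact-match parameters (the paper notes $\Theta^\infty \subseteq \Theta^n$, you note $W_1(P_{\theta^*},P^n)=W_1(P,P^n)\le\alpha_n$), and the hard direction by contradiction using boundedness of $\Theta$ (Bolzano--Weierstrass), continuity of $g$, and reconciling the two weak limits $P_{\theta^{n_k}} \to P$ (shrinking Wasserstein ball, the paper's Lemma~\ref{lemma:convergence}) and $P_{\theta^{n_k}} \to P_{\bar\theta}$ (continuous mapping through $\vect{A}(\theta)$, the paper's Lemma~\ref{lemma:continuous-func}) to force $P_{\bar\theta} = P$, a contradiction. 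Your closing flag about the comparison class is a fair observation: the paper handles it by fiat, declaring linear $\graph$-constrained SCMs as the parameter search space in its appendix assumptions rather than proving that restricting from $\mathfrak{M}$ is without loss of optimality.
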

\begin{proof}
    The goal is to show the solution to 
    \begin{equation}
        \min_{\theta \in \Theta} \text{ATD}_{\scm^\theta_\graph}\text{ s.t. }\ W_1\left(P_{\scm_{\graph}^{\theta}}, P^n\right) \leq \alpha_n
    \end{equation}
    converges to the solution to 
    \begin{equation}
        \min_{\theta \in \Theta} \text{ATD}_{\scm^\theta_\graph},\;
           \text{ s.t. }\ P_{\scm_{\graph}^{\theta}} = P
    \end{equation}
    as $n \to \infty$.
    We first aim to re-write the value of $\text{ATD}$.
    Note that, in a linear $\graph$-constrained SCM, the partial derivative $\frac{\partial Y_{\scm_\graph^\theta(T=t)}(\vect{u})}{\partial t}$ is only a function of SCM parameters $\theta$. 
    Let define $g(\theta) = \frac{\partial Y_{\scm_\graph^\theta(T=t)}(\vect{u})}{\partial t}$.
    Then,
    \begin{align}
        \text{ATD}_{\scm_\graph^\theta}
        &= \int_{\Omega}  \frac{\partial Y_{\scm_\graph^\theta(T=t)}(\vect{u})}{\partial t} \Big|_{t = T(\vect{u})}\, dP_{\hat{\vect{U}}}(\vect{u})
        = \int_{\Omega} g({\theta})\, dP_{\hat{\vect{U}}}(\vect{u}) = g(\theta)
    \end{align}
    Therefore, we need to show the following to conclude the proof:
    \begin{align}
        \min_{\theta^n \in \Theta^n} g(\theta^n) \to
        \min_{\theta^\infty \in \Theta^\infty} g(\theta^{\infty})
    \end{align}
    where 
    \begin{align}
        \Theta^n &= \{\theta \in \Theta;\; W_1(P_{\theta}, P^n) \leq \alpha_n\} \nonumber \\
        \Theta^\infty &= \{\theta \in \Theta;\; P_{\theta} = P\} = \{\theta \in \Theta;\; W_1(P_{\theta}, P) = 0\}
    \end{align}

    Since $\Theta^\infty \subseteq \Theta^n$ for each $n \in \mathbb{N}$, we know that
    \begin{align}
    \min_{\theta^n \in \Theta^n} g(\theta^n) \leq
        \min_{\theta^\infty \in \Theta^\infty} g(\theta^{\infty})  
    \end{align}
    It is sufficient to show that, for each $\epsilon > 0$, there is $n_0$ such that for all $n > n_0$ we have
    \begin{align}
    \min_{\theta^n \in \Theta^n} g(\theta^n) \geq 
        \min_{\theta^\infty \in \Theta^\infty} g(\theta^{\infty}) - \epsilon   
    \end{align}
    Suppose this is not true, i.e., there exists $\epsilon > 0$ such that for each $n \in \mathbb{N}$ we have
    \begin{align}
        \min_{\theta^n \in \Theta^n} g(\theta^{{n}}) < g(\theta^{\infty}) - \epsilon
    \end{align}
    for all $\theta^\infty \in \Theta^\infty$. Let $\theta_\star^n = \arg\min_{\theta \in \Theta^n} g(\theta)$. The sequence $(\theta_\star^n)_{n\in \mathbb{N}}$ is a subset of $\Theta$ and therefore is a bounded sequence in $\mathbb{R}^{|E|}$. Thus, by Bolzano-Weierstrass theorem, there exists a convergent sub-sequence $(\theta_\star^{n_i})_{i \in \mathbb{N}}$ that converges to some fixed parameter $\theta_0$. Also, $\theta_0 \in \Theta$ as $\Theta$ is closed. Now, since $g$ is continuous, we have
    \begin{align}
        g(\theta_0) = \lim_{i \to \infty} g(\theta_\star^{n_i}) \leq g(\theta^\infty) - \epsilon
    \end{align}
    for all $\theta^\infty \in \Theta^\infty$. Hence, $\theta_0 \not \in \Theta^\infty$.
    
    On the other hand, using Lemma~\ref{lemma:continuous-func}, we have $\vect{V}_{\theta_\star^{n_i}} = \vect{A}(\theta_\star^{n_i}) \hat{\vect{U}}$. Since $\vect{A}(\theta_\star^{n_i})$ is a continuous function of $\theta_\star^{n_i}$, from the continuous mapping theorem, we have
    $P_{\theta_\star^{n_i}} \to P_{\theta_0}$ weakly. Also, from Lemma~\ref{lemma:convergence}, we have $P_{\theta_\star^{n_i}} \to P$. Therefore, $P = P_{\theta_0}$. Since $\theta_0 \in \Theta$, we conclude that $\theta_0 \in \Theta^\infty$, a contradiction.
\end{proof}
\xhdr{Remark} In this proof, we did not separate identifiable and non-identifiable cases. In fact, the notion of identifiability can be seen as a property of the set of feasible parameters $\Theta^n$ and $\Theta^\infty$. For example, in identifiable cases, we expect the set $\Theta^\infty$ to only contain one element while in non-identifiable cases, it consists of multiple possible solutions. Our proof holds as long as $\Theta^n$ and $\Theta^\infty$ are bounded subsets of $\R^{|E|}$.

\section{Proof of Corollary \ref{theorem:ade-theorem}} \label{appx:proof-ade}
Similar to the proof of Theorem~\ref{theorem:tight-bound}, define the set of feasible parameters as $\Theta^n = \{\theta\in \Theta \;\; W_{1}(P_{\scm_\graph^\theta}, P^n) \leq \alpha_n\}$.
Then,
\begin{align}
    \text{ATE}_{\scm_\graph^\theta}(d) &= \E_{\vect{u} \sim P_{\hat{\vect{U}}}}
        \left[Y_{\scm_\graph^\theta(T=d)}(\vect{u}) - Y_{\scm_\graph^\theta(T=t_0)}(\vect{u})\right] \nonumber \\
    &=   \E_{\vect{u} \sim P_{\hat{\vect{U}}}} \left[\int_{t_0}^d \frac{\partial Y_{\scm_\graph^\theta(T=t)}}{\partial t}\, dt\right] \nonumber \\
    &= (d-t_0) \cdot  \E_{\vect{u} \sim P_{\hat{\vect{U}}}}
        \left[\int_{t_0}^d \frac{\partial Y_{\scm_\graph^\theta(T=t)}}{\partial t}\, \frac{1}{d - t_0}\, dt\right] \nonumber \\
    &= (d-t_0) \cdot  \E_{\vect{u} \sim P_{\hat{\vect{U}}}}\left[\E_{t \sim \texttt{Unif}[t_0, d]}
                                    \left[\frac{\partial Y_{\scm_\graph^\theta(T=t)}}{\partial t}\right]\right] \nonumber \\
    &= (d-t_0) \cdot \text{UATD}_{\scm_\graph^\theta}[t_0, d]
    \label{eq:help-ade-uatd}
\end{align}
Therefore,
\begin{align}
    \theta^* = \arg\min_{\theta \in \Theta^n} \text{UATD}_{\scm_\graph^\theta}[t_0, d] =
        \arg\min_{\theta \in \Theta^n} (d-t_0)\cdot \text{UATD}_{\scm_\graph^\theta}[t_0, d] =
        \arg\min_{\theta \in \Theta^n} \text{ATE}_{\scm_\graph^\theta}(d)
\end{align}
For the second part, similar to the proof of Theorem~\ref{theorem:tight-bound}, we have
$\frac{\partial Y_{\scm_\graph^\theta(T=t)}}{\partial t} = g(\theta)$ for some continuous function $g$.
Then,
\begin{align}
    \text{ATE}_{\scm_\graph^\theta}(d) &\stackrel{eq.~\ref{eq:help-ade-uatd}}{=}
        (d-t_0) \cdot \E_{\vect{u} \sim P_{\hat{\vect{U}}}}\left[\E_{t \sim \texttt{Unif}[t_0, d]}
                                    \left[\frac{\partial Y_{\scm_\graph^\theta(T=t)}}{\partial t}\right]\right]
    = (d-t_0) \cdot g(\theta)
\end{align}
The rest of the proof can be directly derived from the proof of
\autoref{theorem:tight-bound}.

\section{ATE and ATD} 
\label{appx:ate-atd}% trade off approximation error and regularity
Here, we provide more intuition on the difference between ATE and ATD. Consider a simple setting with only two random variables $T$ and $Y$, where the causal graph is $T \to Y$. Here, the average treatment effect is identifiable and can be computed (with infinite samples) using the following formula:
\begin{align}
    \text{ATE}_{\scm}(d) = \E_P[Y | T=d] - \E_P[Y | T = t_0]
\end{align}

Now, assume we have a finite dataset $\mathcal{D} = \{(t^{(1)}, y^{(1)}), \cdots, (t^{(n)}, y^{(n)})\}$, where $t^{(i)} \neq t_0$ and $t^{(i)} \neq d$ for all $i \in [n]$. One way to find a (high probability) upper bound on the value of ATE is to solve the following direct optimization:
\begin{align}
    \max_{\theta}\, \text{ATE}_{\scm_\graph^\theta}(d) \quad \text{s.t.}\quad W_1(P_{\scm_\graph^\theta}, P^n) \leq \alpha_n
    \label{eq:ate-direct-opt}
\end{align}
With no assumption on the regularity of the response functions $Y_{\scm_\graph^\theta(T=t)}$, it is possible to find a solution to eq.~\ref{eq:ate-direct-opt} that matches all points in $\mathcal{D}$, i.e., $W_1(P_{\scm_\graph^\theta}, P^n) = 0$, while getting arbitrarily values of $\text{ATE}_{\scm_\graph^\theta}(d)$. See Figure~\ref{fig:ate-intuition} for a demonstration. This shows that ATE, in the continuous treatment setting with finite number of samples, is not well-behaved and it is generally impossible to find informative non-parametric bounds on that (see also~\citet{gunsilius-iv}). On the other hand, ATD, which is the average partial derivative of response function w.r.t. the observed treatment distribution is defined \textit{globally} over the support of $T$. Therefore, it is not possible to maximize ATD arbitrarily without violating the distributional constraint in this setting.

\begin{figure}[t]
\centering
\includegraphics{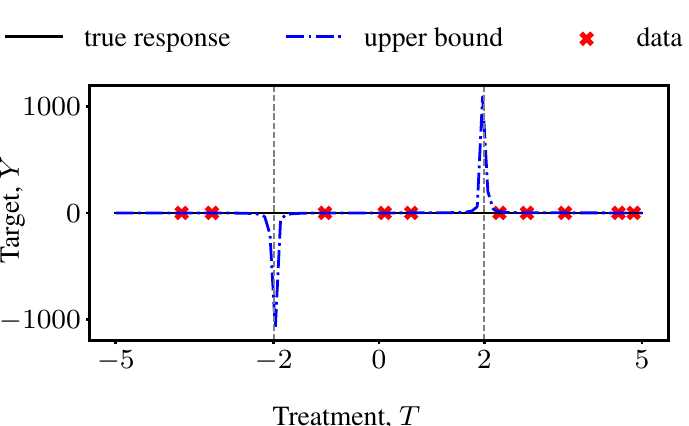}
\caption{(a) Irregularity of the ATE objective. In finite sample setting, it is possible to find a model that matches the empirical distribution, while creating arbitrarily large values of ATE between $T=2$ and $T=-2$.}
\label{fig:ate-intuition}    
\end{figure}

\section{Algorithm}\label{appx:algorithm}
\begin{algorithm}
\caption{Partial Identification of Average Treatment Derivatives}\label{alg:atd}
\textbf{Input}:
Dataset $\mathcal{D} = \{X^{(i)}, T^{(i)}, Y^{(i)}\}_{i=1}^N$,
causal graph $\graph$, Sinkhorn approximation parameter $\epsilon$, learning rate $\gamma$, Lagrange learning rate $\gamma_{L}$\\
\textbf{Output}:
$\hat{\lowerATD}$: lower bounds on ATD \Comment{The algorithm to find $\hat{\upperATD}$ is similar}
\begin{algorithmic}[1]
\Algphase{Phase 1: Matching Distributions}
\State{Initialize $\scm_{\graph}^{\theta^{(0)}}$ and $i=0$}
\While{$S_\epsilon(\hat{D}, D)$ not converged}
    \Comment{Train the model to approximate the observed distribution.}
    \State{$\hat{\mathcal{D}} \leftarrow \{\hat{X}^{(j)}, \hat{T}^{(j)}, \hat{Y}^{(j)}\}_{j=1}^N \sim \scm_{\graph}^{\theta^{(i)}}$} \Comment{Generate dataset from the trained SCM}
    \State{$\theta^{(i+1)} \gets \theta^{(i)} - \gamma \nabla_{\theta} S_\epsilon(\hat{\mathcal{D}}, \mathcal{D})$}
    \State{$i \leftarrow i + 1$}
\EndWhile\\
$\alpha \gets S_\epsilon(\hat{\mathcal{D}}, \mathcal{D})$ \Comment{Set the distributional constraint level to the minimum Sinkhorn loss}
\Algphase{Phase 2: Joint Optimization Phase}
\State{Initialize Lagrange multiplier $\lambda^{(0)}$ and $j=0$}
\While{$\text{ATD}_{\scm_\graph^{\theta^{(i+j)}}}$ not converged}
    \State{$\text{ATD}_{\scm_\graph^{\theta^{(i+j)}}} \gets $ calculate ATD using eq.~\ref{eq:calc-atd}}\\
    { ~ ~ ~ \# Alternate optimization}
    \State{$\bar{\theta}^{(i+j)} \gets \theta^{(i+j)} - \gamma \nabla_{\theta} \text{ATD}_{\scm_\graph^{\theta^{(i+j)}}}$} \Comment{Update the parameters to minimize the ATD}
    \State{$\hat{\mathcal{D}} \leftarrow \{\hat{X}^{(k)}, \hat{T}^{(k)}, \hat{Y}^{(k)}\}_{k=1}^N \sim \scm_{\graph}^{\bar{\theta}^{(i+j)}}$}
    \State{$\theta^{(i+j+1)} \gets \bar{\theta}^{(i+j)} - \gamma \nabla_{\theta} S_\epsilon(\hat{\mathcal{D}}, \mathcal{D})$} \Comment{Update the parameters to minimize the Sinkhorn loss}\\
    { ~ ~ ~ \# Lagrange multiplier update}
    \State{$\lambda^{(j+1)} \leftarrow \lambda^{(j)} + \gamma_{L} (S_\epsilon(\hat{\mathcal{D}}, \mathcal{D}) - \alpha)$}
    \State{$j \leftarrow j + 1$}
\EndWhile
\State{\Return{$\text{ATD}_{\scm_\graph^{\theta^{(i+j)}}}$}}
\end{algorithmic}
\end{algorithm}
\xhdr{Extension of Algorithm~\ref{alg:atd} to ATEs}
We can use a similar method to Algorithm~\ref{alg:atd} for partial identification of average treatment effects (ATEs). The only difference is that, instead of maximizing/minimizing $\text{ATD}_{\scm_\graph^{\theta}}$, we optimize for the value of $\text{UATD}_{\scm_\graph^{\theta}}[t_0, d]$. More concretely, we use the following approximation to estimate $\text{UATD}_{\scm_\graph^{\theta}}[t_0, d]$:
\begin{align}
    \text{UATD}_{\scm_\graph^\theta}[t_0, d] \approx \frac{1}{n} \sum_{i=1}^n \frac{1}{\epsilon} \left[Y_{\scm_\graph^\theta(T=t^{(i)}+\epsilon)}(\vect{u}^{(i)}) - Y_{\scm_\graph^\theta(T=t^{(i)})}(\vect{u}^{(i)})\right]
    \label{eq:calc-ate}
\end{align}
where $\{t^{(i)}\}_{i=1}^n$ are samples from a uniform distribution within $[t_0, d]$ with a Gaussian tail, and $\{\vect{u}^{(i)}\}_{i=1}^n$ are the latent variables generated from a uniform distribution. Note that, the only difference between eq.~\ref{eq:calc-ate} and eq.~\ref{eq:calc-atd} is the distribution of treatment variables, where in the former we use a uniform distribution with Gaussian tail, while the latter uses the same distribution of treatments in the observed dataset. In our experiments, for the uniform distribution with Gaussian tail, we generate samples from $\mathcal{N}(\mu, \sigma)$, where $\mu = \frac{t_0 + d}{2}$ and $\sigma = \frac{d - t_0}{2}$. Then, for each sample within $[t_0, d]$, we generate a new sample from uniform distribution $\texttt{Unif}[t_0, d]$.

\section{Additional Experiments} \label{appx:add-exp}
\subsection{Discrete Setting}
To showcase the generality of our framework, we study two datasets with binary treatments. We consider the binary IV dataset described in~\citet{automated}, where the true value of ATE is not identifiable, but the optimal bound is known. We also use the Front-door binary dataset in~\citet{partial-ident-elias} where the causal effect is identifiable (See~\autoref{appx:DGPs}). Here, the partial derivatives do not exist, so we directly optimize the ATE. Note that, in the discrete setting, the network cannot generate arbitrary large values in the intervention points without violating the distributional constraint. Table~\ref{tab:discrete-results} shows our derived bounds and compares them to the optimal bounds. In the identifiable Front-door causal graph, we find a tight bound over the true ATE. In the non-identifiable IV setting, our bound includes the optimal bound with a small gap.
\begin{table}[t]
    \centering
    \caption{\small The bounds derived by our method over the ATE. The results include the optimal bound.}
    \begin{tabular}{cccc} \toprule[1.5pt]
       Causal Graph  &  Ours &  Optimal Bound & True Value  \\ \midrule[1.5pt]
      Front-door (Discrete)   & (0.4374, 0.5322) & -- & 0.5085 \\
      IV (Discrete) & (-0.5629, -0.0821)& (-0.55, -0.15) & -0.25
    \end{tabular}
    \label{tab:discrete-results}
    \vspace{-1em}
\end{table}

\subsection{ACIC Dataset}
To demonstrate the applicability of our method on datasets with higher-dimensional covariates, we consider the Atlantic Causal Inference Conference (ACIC) 2019 Data Challenge~\citep{acic2019}. The dataset is constructed based on the spam detection data from UCI~\citep{spambase}. The outcome of interest $Y$ is whether an email is marked as spam or not. The treatment variable $T$ is also a binary variable showing if the number of capital letters in an email exceeds a threshold. There are 22 continuous covariates that correspond to certain word frequencies. 

We follow a similar setup as in~\citet{guo2022partial}. In particular, we consider the problem of partial identification under noisy measurements. Here, the data-generating causal graph is the Back-door setting, and the causal effect of $T$ on $Y$ is identifiable. However, we assume measurement error on the covariates by imposing synthetic noise on them and aim to estimate bounds on ATE under this uncertainty. Since our algorithm can incorporate uncertainty through the distributional constraint, it will lead to a valid and informative bound by choosing an appropriate value for $\alpha_n$.

We generate a dataset of $2{,}000$ samples using ACIC's data-generating process.~Similar to~\citet{guo2022partial}, we synthetically add five different levels of Gaussian noise with mean $\in (0.1, 0.2, 0.3, 0.4, 0.5)$ and standard deviation $\in (0.5, 0.5, 1, 1, 1)$. We then run our algorithm on these five noisy datasets and the original noiseless one, $10$ different trials for each. Figure~\ref{fig:acic} illustrates our derived bounds and the true ATE for each of the noise levels. The results always include the actual value of ATE, while not being too conservative. As the noise level increases, our derived bounds get naturally less informative.~\footnote{We heuristically choose the value of hyper-parameter $\alpha_n$ by multiplying the minimum Sinkhorn divergence by factors of $(1.2, 1.3, 1.5, 1.6, 1.7)$ for the noise levels, respectively.}

\begin{figure}[t]
    \centering
    \includegraphics{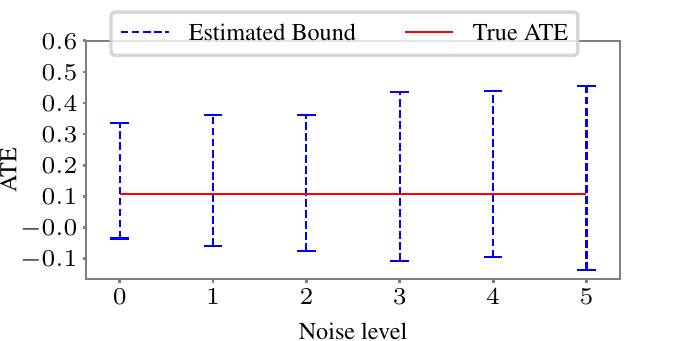}
    \caption{Partial identification of ATE in the ACIC dataset under noisy measurements. Level $0$ corresponds to the original data, and $1$ to $5$ represents the noise levels, from lower to higher measurement error. The result bounds always include the true ATE while not being highly conservative (returning the full support $[-1, 1]$). We take the average over 10 runs with different random seed for each noise level.}
    \label{fig:acic}
\end{figure}

% \section{Additional Experiments}\label{appx:experiments}
\section{Implementation Choices}\label{appx:implementation}
We use similar neural network architectures for each variable in the causal graph. For hyper-parameter tuning, we search over networks with $\{2, 3, 5\}$ hidden layers, $\{16, 64, 256\}$ neurons in each layer, learning rate within $\{0.001, 0.005, 0.01\}$, and Lagrange multiplier learning rate $\{0.5, 1\}$. The hyper-parameters with the lowest Sinkhorn loss are chosen for each causal graph. We run all experiments for $500$ epochs and use all samples ($N = 5000$) in each iteration. To find the level of distribution constraint $\alpha_n$, we minimize the Sinkhorn loss and check the loss value on a validation set until it does not decrease for $30$ epochs. We then choose the minimum value of Sinkhorn loss on the training data as $\alpha_n$. We use an alternate optimization approach to maximize/minimize the ADE (or ATE). In each step, we first minimize the Sinkhorn loss and then maximize/minimize the value of ADE. We use gradient clipping for the ADE optimizer with the value of $0.2$. The learning rate of the ADE optimizer is half the learning rate for the Sinkhorn loss optimizer. To find the upper (lower) bounds on ATD/ATE, we find their maximum (minimum) value within all steps that satisfy the distributional constraint. All implementation is done in PyTorch Lightning using Adam optimizer. To evaluate and calculate gradients of Sinkhorn loss, we use the "geomloss" library~\citep{sinkhorn}.~\footnote{\url{https://www.kernel-operations.io/geomloss/}}

\section{Data Generating Processes}\label{appx:DGPs}
\xhdr{Linear Back-door}
\begin{align}
    &X \sim \mathcal{N}(2, 1) \nonumber \\
    &T \sim 0.1X^2 - X + \mathcal{N}(1, 2) \nonumber \\
    &Y \sim 0.5T^2 - TX + \mathcal{N}(0, 2)
\end{align}
\xhdr{Nonlinear Back-door}
\begin{align}
    &X_1, X_2, X_3 \sim \mathcal{N}(1, 1) \nonumber \\
    &T \sim X_1 - X_2 + 2X_3 + 2 + \mathcal{N}(0, 3) \nonumber \\
    &Y \sim 3X_1 + X_2 - 0.5X_3 + 3T + \mathcal{N}(0, 2)
\end{align}
\xhdr{Front-door}
\begin{align}
    &U \sim \mathcal{N}(-1, 1) \nonumber \\
    &T \sim U + \mathcal{N}(2, 2) \nonumber \\
    &X \sim 2 T + \mathcal{N}(1, 2) \nonumber\\
    &Y \sim 0.25X^2 -X + U + \mathcal{N}(0, 2)
\end{align}

\xhdr{Linear IV (weak instrument, strong confounding)}
\begin{align}
    &Z_1 \sim \mathcal{N}(-1, 1) \nonumber \\
    &Z_2 \sim \mathcal{N}(0, 1) \nonumber \\
    &U \sim \mathcal{N}(0, 1) \nonumber \\
    &T \sim Z_1 - Z_2 + 0.5U + \mathcal{N}(0, 1) \nonumber \\
    &Y \sim 0.5T - 3U + \mathcal{N}(0, 1)
\end{align}

\xhdr{Nonlinear IV (strong instrument, weak confounding)}
\begin{align}
    &Z_1 \sim \mathcal{N}(-1, 1) \nonumber \\
    &Z_2 \sim \mathcal{N}(0, 1) \nonumber \\
    &U \sim \mathcal{N}(0, 1) \nonumber \\
    &T \sim 3Z_1 +1.5 Z_2 + 0.5U + \mathcal{N}(0, 1) \nonumber \\
    &Y \sim 0.3T^2 - 1.5T + U + \mathcal{N}(0, 1)
\end{align}

% \xhdr{Nonlinear IV (weak instrument, strong confounding)}

\xhdr{Leaky Mediation}
\begin{align}
    &U_1 \sim \mathcal{N}(1, 1) \nonumber \\
    &U_2 \sim \mathcal{N}(-1, 1) \nonumber \\
    &C \sim \mathcal{N}(0, 1) \nonumber \\
    &T \sim C + \mathcal{N}(0, 1) \nonumber \\
    &X_1 \sim T + U_1 + \mathcal{N}(0, 1) \nonumber\\
    &X_2 \sim 2T + U_2 + \mathcal{N}(0, 1) \nonumber\\
    &Y \sim -1.5X_1 + 2X_2 + U_1 + U_2 + C + \mathcal{N}(0, 1)
\end{align}

\xhdr{Binary IV~\citep{automated}}
We use the noncompliance IV dataset in Section~D.1 from~\citet{automated}. The true value of ATE is $-0.25$ while the optimal bound is $[-0.55, -0.15]$.

\xhdr{Binary Front-door~\citep{partial-ident-elias}}
\begin{align}
    &U_1 \sim \texttt{Unif}(0, 1) \nonumber \\
    &U_2 \sim \mathcal{N}(0, 1) \nonumber \\
    &T \sim \texttt{Binomial}(1, U_1)\nonumber \\
    &W \sim \texttt{Binomial}(1, \frac{1}{1 + exp(-T-U_2)}) \nonumber\\
    &Y \sim \texttt{Binomial}(1, \frac{1}{1+exp(W -U_1)})
\end{align}

\xhdr{Linear IV with strong confounding~\citep{padh2022stochastic}}
\begin{align}
    &Z \sim \mathcal{N}(0, 1) \nonumber \\
    &U \sim \mathcal{N}(0, 1) \nonumber \\
    &T \sim 0.5Z + 3U + \mathcal{N}(0, 1) \nonumber \\
    &Y \sim T - 6U + \mathcal{N}(0, 1)
\end{align}

\xhdr{Nonlinear IV with nonlinear interaction between treatment and confounding~\citep{padh2022stochastic}}
\begin{align}
    &Z \sim \mathcal{N}(0, 1) \nonumber \\
    &U \sim \mathcal{N}(0, 1) \nonumber \\
    &T \sim 3Z + 0.5U + \mathcal{N}(0, 1) \nonumber \\
    &Y \sim 0.3T^2 - 1.5TU + \mathcal{N}(0, 1)
\end{align}
% \newpage

% \newpage
% \section{Improvement}
% \input{improvement}
 
\end{document}